
\documentclass{article}

\usepackage{microtype}
\usepackage{graphicx}
\usepackage{subfigure}
\usepackage{booktabs} 

\usepackage{hyperref}



\usepackage[accepted]{icml2024}

\usepackage{amsmath}
\usepackage{amssymb}
\usepackage{mathtools}
\usepackage{amsthm}
\usepackage{multirow}

\usepackage[capitalize,noabbrev]{cleveref}

\theoremstyle{plain}
\newtheorem{theorem}{Theorem}[section]

\newtheorem{corollary}[theorem]{Corollary}
\theoremstyle{definition}
\newtheorem{definition}[theorem]{Definition}

\theoremstyle{remark}

\usepackage[textsize=tiny]{todonotes}

\icmltitlerunning{EfficientZero V2: Mastering Discrete and Continuous Control with Limited Data}

\begin{document}

\twocolumn[
\icmltitle{EfficientZero V2: Mastering Discrete and Continuous Control with Limited Data}



\icmlsetsymbol{equal}{*}

\begin{icmlauthorlist}
\icmlauthor{Shengjie Wang}{equal,thu,sqz,sai}
\icmlauthor{Shaohuai Liu}{equal,thu,texam}
\icmlauthor{Weirui Ye}{equal,thu,sqz,sai}
\icmlauthor{Jiacheng You}{thu}
\icmlauthor{Yang Gao$^\dag$}{thu,sqz,sai}
\end{icmlauthorlist}

\icmlaffiliation{thu}{Institute for Interdisciplinary Information Sciences, Tsinghua University, Beijing, China}
\icmlaffiliation{sqz}{Shanghai Qi Zhi Institute, Shanghai, China}
\icmlaffiliation{sai}{Shanghai Artificial Intelligence Laboratory, Shanghai, China}
\icmlaffiliation{texam}{Texas A\&M University}

\icmlcorrespondingauthor{Yang Gao}{gaoyangiiis@mail.tsinghua.edu.cn}

\icmlkeywords{Machine Learning, ICML}

\vskip 0.3in
]



\printAffiliationsAndNotice{\icmlEqualContribution} 

\begin{abstract}
    Sample efficiency remains a crucial challenge in applying Reinforcement Learning (RL) to real-world tasks. While recent algorithms have made significant strides in improving sample efficiency, none have achieved consistently superior performance across diverse domains. In this paper, we introduce EfficientZero V2, a general framework designed for sample-efficient RL algorithms.  
    We have expanded the performance of EfficientZero to multiple domains, encompassing both continuous and discrete actions, as well as visual and low-dimensional inputs. With a series of improvements we propose, EfficientZero V2 outperforms the current state-of-the-art (SOTA) by a significant margin in diverse tasks under the limited data setting. EfficientZero V2 exhibits a notable advancement over the prevailing general algorithm, DreamerV3, achieving superior outcomes in \textbf{50} of \textbf{66} evaluated tasks across diverse benchmarks, such as Atari 100k, Proprio Control, and Vision Control.
\end{abstract}

\section{Introduction}
\label{intro}
Reinforcement learning (RL) has empowered computers to master diverse tasks, such as Go \citep{silver2018general}, video games \citep{ye2021mastering}, and robotics control \cite{hwangbo2019learning,andrychowicz2020learning,akkaya2019solving}. 
However, these algorithms require extensive interactions with their environments, leading to significantly increased time and computational costs \citep{petrenko2023dexpbt,chen2022system}. 
For example, an RL-based controller requires nearly 100M interactions to reorient complex and diverse object shapes using vision input \cite{chen2023visual}.
Furthermore, building certain simulators for daily housework could be challenging. If gathering data in real-world settings, the process tends to be time-consuming and expensive.
Consequently, it is crucial to explore and develop RL algorithms to achieve high-level performance with limited data.


\begin{figure}[t]
\centering
\includegraphics[width=0.47\textwidth]{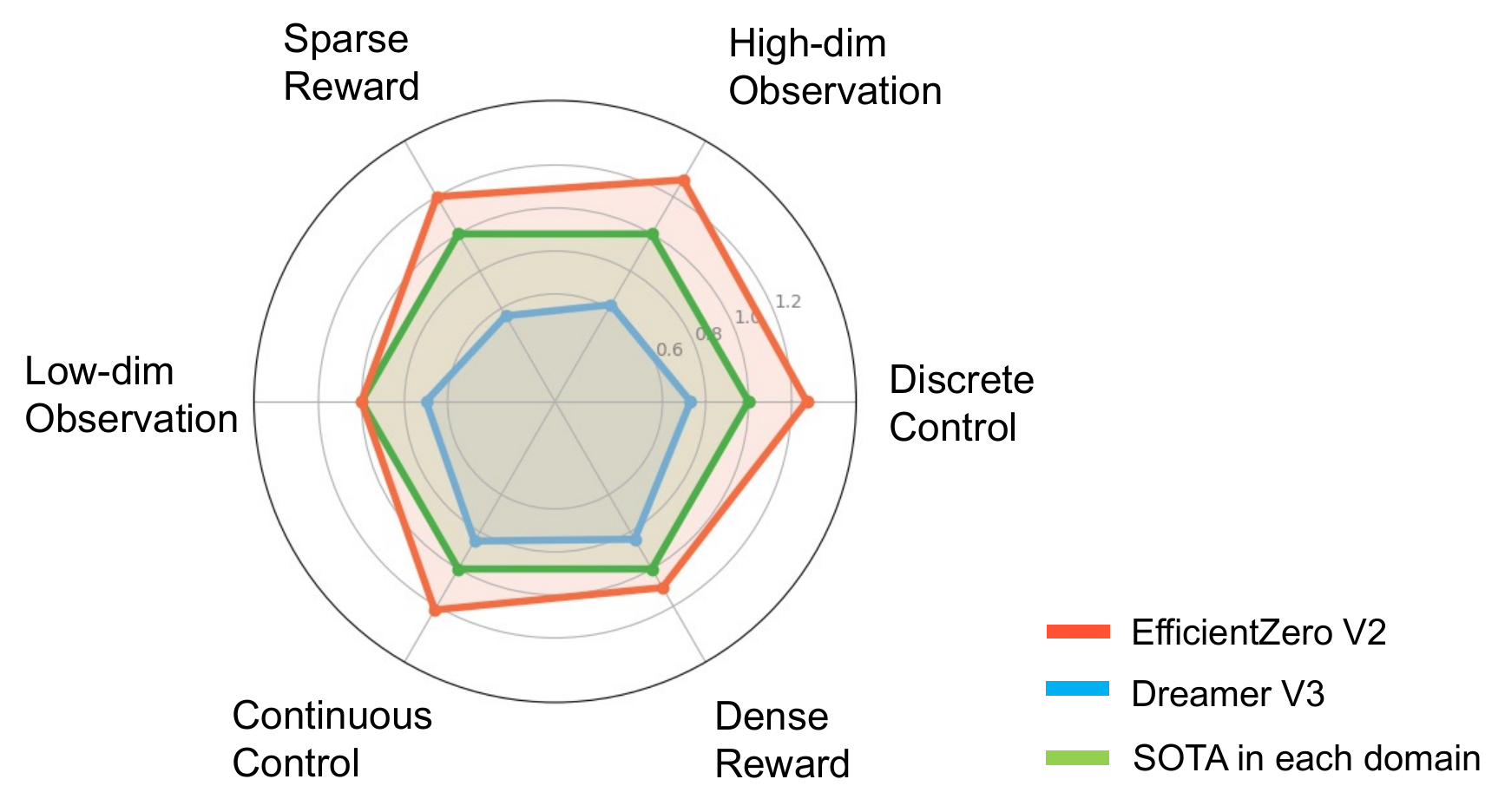}
\caption{Comparison between EfficientZero V2, DreamerV3 and other SOTAs in each domain. We evaluate them under the Atari 100k, DMControl Proprio, and DMControl Vision benchmarks. 
We then set the performance of the previous SOTA as 1, allowing us to derive normalized mean scores for both EfficientZero V2 and Dreamer V3. EfficientZero V2 surpasses or closely matches the previous SOTA in each domain.}
\label{main_sell}
\end{figure}


Previous studies have introduced a range of algorithms aimed at enhancing sample efficiency, including TD-MPC series \citep{hansen2022temporal, Anonymous2023TDMPC2}, EfficientZero \citep{ye2021mastering}, and the Dreamer series \citep{hafner2019dream, hafner2021mastering, hafner2023mastering}. Despite these advancements, these algorithms do not consistently attain superior sample efficiency across multiple domains. For example, TD-MPC \citep{hansen2022temporal} leverages Model Predictive Path Integral (MPPI) \citep{rubinstein1997optimization} control for planning. The huge computational burden in planning hinders its application in vision-based RL.
EfficientZero \citep{ye2021mastering} employs the Monte-Carlo Tree Search (MCTS) algorithm, which masters discrete control. 
However, EfficientZero is unable to handle high-dimensional action spaces, especially in continuous control scenarios.
DreamerV3 \citep{hafner2023mastering} is a universal algorithm that extends to diverse tasks in a wide range of domains. However, as shown in Fig. \ref{main_sell}, DreamerV3 still has noticeable performance gaps to the state-of-the-art (SOTA) algorithms in each domain.
Thus, there remains an open question on how to achieve both high performance and sample efficiency across various domains at the same time. 

In this paper, we propose EfficientZero-v2 (EZ-V2), which can master tasks across various domains with superior sample efficiency.
EZ-V2 successfully extends EfficientZero's strong performance to continuous control, demonstrating strong adaptability for diverse control scenarios. 
The main contributions of this work are as follows. 
\begin{itemize}
    \item We propose a general framework for sample-efficient RL. Specifically, it achieves consistent sample efficiency for discrete and continuous control, as well as visual and low-dimensional inputs.
    \item We evaluate our method in multiple benchmarks, outperforming the previous SOTA algorithms under limited data. As shown in Fig.\ref{main_sell}, the performance of EZ-V2 exceeds DreamerV3, a universal algorithm, by a large margin covering multiple domains with a data budget of 50k to 200k interactions. 
    \item We design two key algorithmic enhancements: a sampled-based tree search for action planning, ensuring policy improvement in continuous action spaces, and a search-based value estimation strategy to more efficiently utilize previously gathered data and mitigate off-policy issues.
\end{itemize}


\section{Related work}

\subsection{Sample Efficient RL}

Sample efficiency in RL algorithms remains an essential direction for research. Inspired by advances in self-supervised learning, many RL algorithms now employ this approach to enhance the learning of representations from image inputs. 
For instance, CURL \citep{laskin2020curl} employs contrastive learning on hidden states to augment the efficacy of fundamental RL algorithms in image-based tasks. Similarly, SPR \citep{schwarzer2020data} innovates with a temporal consistency loss combined with data augmentations, resulting in enhanced performance.

Furthermore, Model-Based Reinforcement Learning (MBRL) has demonstrated high sample efficiency and notable performance in both discrete and continuous control domains.
SimPLE \citep{kaiser2019model}, by modeling the environment, predicts future trajectories, thereby achieving commendable performance in Atari games with limited data.
TD-MPC \citep{hansen2022temporal} utilizes data-driven Model Predictive Control (MPC) \citep{rubinstein1997optimization} with a latent dynamics model and a terminal value function, optimizing trajectories through short-term planning and estimating long-term returns. The subsequent work, TD-MPC2 \citep{Anonymous2023TDMPC2}, excels in multi-task environments. 
The TD-MPC series employs MPC to generate imagined latent states for action planning. In contrast, our method employs a more efficient action planning module called Sampling-based Gumbel Search, leading to lower computational costs.
Dreamer \citep{hafner2019dream}, a reinforcement learning agent, develops behaviors from predictions within a compact latent space of a world model. Its latest iteration, Dreamer V3 \citep{hafner2023mastering}, is a general algorithm that leverages world models and surpasses previous methods across a wide range of domains. It showcases its sample efficiency by learning online and directly in real-world settings \citep{daydreamer}. 
Although long-horizon planning in Dreamer V3 \citep{hafner2023mastering} enhances the quality of the collected data, an imagination horizon of $H=15$ may be excessively long, potentially leading to an accumulation of model errors.

\subsection{MCTS-based Work} 

AlphaGo \citep{silver2016mastering} is the first algorithm to defeat a professional human player in the game of Go, utilizing Monte-Carlo Tree Search (MCTS) \citep{coulom2006efficient} along with deep neural networks. 
AlphaZero \citep{silver2017masteringchess} extends this approach to additional board games such as Chess and Shogi. 
MuZero \citep{schrittwieser2020mastering}, aspiring to master complex games without prior knowledge of their rules, learns to predict game dynamics by training an environment model.
Building upon MuZero, EfficientZero \cite{ye2021mastering} achieves superhuman performance in Atari games with only two hours of real-time gameplay, attributed to the self-supervision of the environment model. 
However, applying MuZero to tasks with large action spaces significantly increases the computational cost of MCTS due to the growing number of simulations. Gumbel MuZero \citep{danihelka2021policy} effectively diminishes the complexity of search within vast action spaces by implementing Gumbel search, although it does not extend to continuous action domains. Sample MuZero \cite{hubert2021learning} proposes a sampling-based MCTS that contemplates subsets of sampled actions, thus adapting the MuZero framework for continuous control. Recent developments have also seen MuZero applied in stochastic environments \citep{antonoglou2021planning} and its value learning augmented by path consistency (PC) optimality regularization \citep{zhao2022efficient}. 
Our method notably enhances Gumbel search for continuous control and requires only half the number of search simulations compared to Sample MuZero \citep{hubert2021learning}.

\begin{figure*}[t]
\centering
\includegraphics[width=0.98\textwidth]{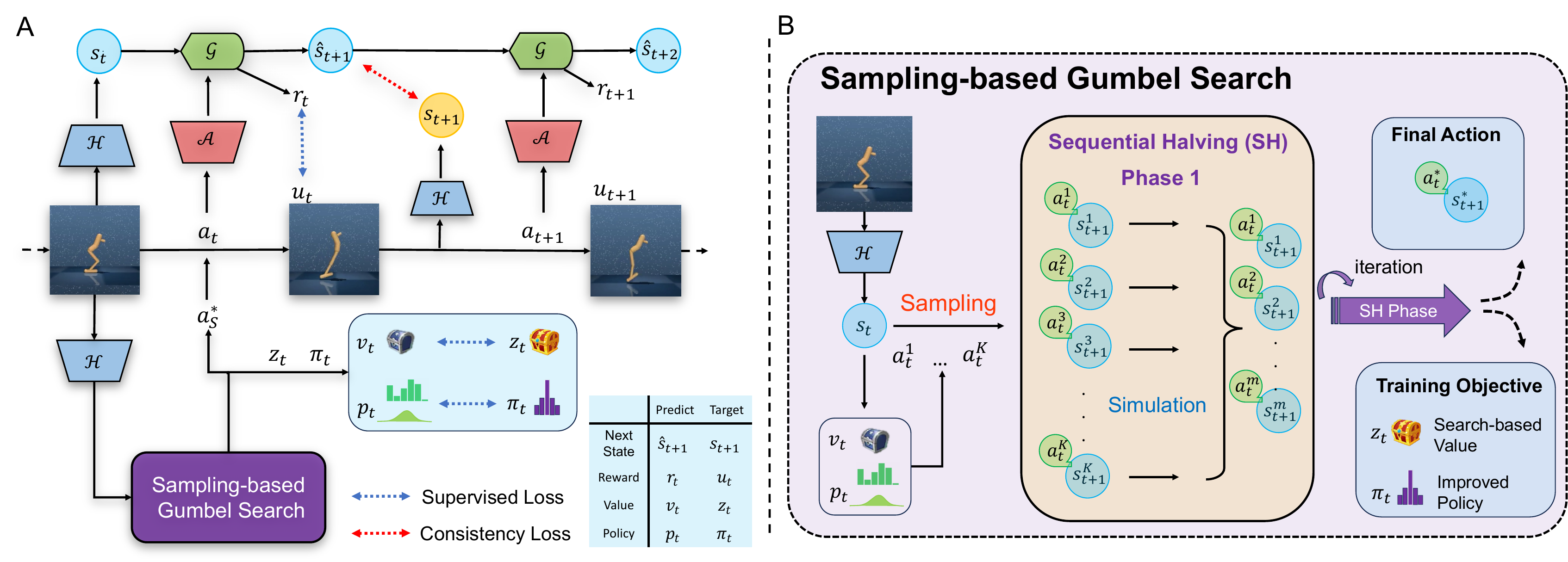}
\caption{Framework of EZ-V2. (\textbf{A}) How EZ-V2 trains its model. The representation \(\mathcal{H}\) takes observations as inputs and outputs the state. The dynamic model \(\mathcal{G}\) predicts the next state and reward based on the current state and action. Sampling-based Gumbel search outputs the target policy \(\pi_t\) and target value \(z_t\). (\textbf{B}): How the sampling-based Gumbel search uses the model to plan. The process contains action sampling and selection. The iterative action selection outputs the recommended action $a^*_S$, search-based value target (target value), and improved policy (target policy).}
\label{framework}
\end{figure*}

\section{Preliminary}

\subsection{Reinforcement Learning}
Reinforcement Learning (RL) can be formulated as a Markov Decision Process (MDP) \citep{bellman1957markovian}. An MDP in this context is formalized as a tuple \( (S, A, T, R, \gamma) \), where \( s \in S \) represents states, \( a \in A \) denotes actions, \( T: S \times A \rightarrow S \) is the transition function, \( R: S \times A \rightarrow \mathbb{R} \) is the reward function associated with a particular task, and \( \gamma \) is the discount factor. 
The goal in RL is to find an optimal policy \( \pi^* \) that maximizes the expected sum of discounted rewards over time, formally expressed as \( \max_\pi \mathbb{E} \left[ \sum_{t=0}^{\infty} \gamma^t R(s_t, a_t) \right] \), where \( a_t \) is an action taken according to the policy \( \pi \) at state \( s_t \). 
In this work, to improve sample efficiency, we learn a model of the environment during training. Meanwhile, planning with the learned model makes action selection more efficient.

\subsection{Gumbel-Top-k Trick}
\label{main_gumbel_topk}
The Gumbel-Top-k trick \citep{kool2019stochastic} can choose the top $n$ actions without replacement in a categorical distribution $\pi$. Specifically, the action sample $A$ can be sampled by the Gumbel-Max trick, which is defined as follows.
\begin{equation}
\begin{aligned}
\left(g \in \mathbb{R}^k\right) & \sim \operatorname{Gumbel}(0) \\
A & =\underset{a}{\arg \max }(g(a)+\operatorname{logits}(a)) 
\end{aligned}
\end{equation}
where $\operatorname{logits}(a)$ is the logit of the action a, and $g$ is a vector of $k$ Gumbel variables. Hereafter, we can sample $n$ actions without replacement. 
\begin{equation}
\begin{aligned}
A_1 & =\underset{a}{\arg \max }(g(a)+\operatorname{logits}(a)) \\
\vdots & \\
A_n & =\underset{a \notin\left\{A_1, \ldots, A_{n-1}\right\}}{\arg \max }(g(a)+\operatorname{logits}(a)) .
\end{aligned}
\end{equation}
Furthermore, we denote the set of $n$ top actions by $\operatorname{argtop}(g + logits, n) = {A_1, A_2,..., A_n}$.

\subsection{EfficientZero}

\subsubsection{Newtork Structure}
EfficientZero algorithm learns a predictive model in a latent space and then performs planning over actions using this model. Specifically, the components of EfficientZero consist of a representation function, a dynamic function, a policy function, and a value function, which are formulated as follows.
\begin{itemize}
    \item Representation Function:  $\mathcal{H}:s_t=\mathcal{H}(o_t)$
    \item Dynamic Function: $\mathcal{G}: \hat{s}_{t+1}, r_t =\mathcal{G}(s_t,a_t)$ 
    \item Policy Function: $\mathcal{P}:p_t=\mathcal{P}(s_t)$
    \item Value Function : $\mathcal{V}: v_t = \mathcal{V}(s_t)$
\end{itemize}
Among these components, \( o_t \) represents the current observation, \( s_t \) is the current latent state, and \( \hat{s}_{t+1} \) is the predicted next state. The representation function learns a compact state representation of the input \( o_t \). The dynamic function predicts the next state \( \hat{s}_{t+1} \) and the reward \( r_t \). 
The policy function outputs the current policy \( p_t \), and the value function provides the value estimation \( v_t \) at the current state. All components are implemented as neural networks. EZ-V2 uses a similar network structure (see Figure 2), with details of the architecture for each component provided in Appendix \ref{arch}. The main difference is that the action embedding block \( \mathcal{A} \) encodes the action \( a_t \) as a latent vector within the dynamic function.

\subsubsection{Training Process}
For the dynamic function, EfficientZero employs a supervised learning method using the true reward \( u_t \). Furthermore, EfficientZero introduces temporal consistency \citep{ye2021mastering}, which strengthens the supervision between the predicted next state \( \hat{s}_{t+1} \) and the true next state \( s_{t+1} \).
Using the learned model, an MCTS method is used for planning actions. The method can generate the target policy \( \pi_t \) and target value \( z_t \) for further supervised learning of the policy and value functions. 
Simultaneously, it outputs the recommended action \( a^*_S \) for interaction with the environment. The iterative process of interaction and training enhances the accuracy of the dynamic function's predictions and gradually improves the policy and value functions.
EZ-V2 inherits the training process of EfficientZero but replaces the MCTS method with a sampling-based Gumbel search, ensuring policy improvement in continuous action spaces. Figure \ref{framework} (A) intuitively illustrates the training process of EZ-V2.

All parameters of the components are trained jointly to match the target policy, value, and reward. 
\begin{equation}
\label{loss}
    \begin{split}
        \mathcal{L}_t &= \lambda_1\mathcal{L}_\mathcal{R}(u_t,r_t)+\lambda_2\mathcal{L}_\mathcal{P}(\pi_t,p_t)\\ &+\lambda_3\mathcal{L}_\mathcal{V}(z_t,v_t)+\lambda_4\mathcal{L}_\mathcal{G}(s_{t+1},\hat{s}_{t+1}) \\
    \end{split}
\end{equation}
where $u_t$ denotes the environmental reward, $\pi_t$ is the target policy from the search and $z_t$ represents the target value from the search. $\mathcal{L}_\mathcal{R}$, $\mathcal{L}_\mathcal{P}$ and $\mathcal{L}_\mathcal{V}$ all represent supervised learning losses.
$\mathcal{L}_\mathcal{G}$ is the temporal consistency loss \cite{schwarzer2020data}, which enhances the similarity between the predicted next state $\hat{s}_{t+1}$ with the next state $s_{t+1}$ and is formulated as: 
\begin{equation}
\label{loss}
    \begin{split}
        \mathcal{L}_{\mathcal{G}}\left(s_{t+1}, \hat{s}_{t+1}\right)&=\mathcal{L}_{cos}\left(s g\left(P_1\left(s_{t+1}\right)\right), P_2\left(P_1\left(\hat{s}_{t+1}\right)\right)\right)
    \end{split}
\end{equation}
where $\mathcal{L}_{cos}$ is the negative cosine similarity loss and $sg$ means the stop gradient operator.
The asymmetric design of employing $P_1$ and $P_2$ follows the setting in SimSiam \citep{chen2021exploring}. More details about the architecture also can be found in EfficientZero \citep{ye2021mastering}.
To enhance the prediction accuracy of the model, we unroll the loss with $l_{\text {unroll }}$ steps to train the model.
\begin{equation}
\label{loss}
    \begin{split}
    \mathcal{L} &=\frac{1}{l_{\text {unroll }}} \sum_{i=0}^{l_{\text {unroll }}-1} \mathcal{L}_{t+i} \\
    \end{split}
\end{equation}
More details of the training pipeline can be found in Appendix \ref{pipeline}.

\section{Method}

\subsection{Overview}
%
EZ-V2 is built upon EfficientZero, a model-based algorithm that performs planning using MCTS within a learned environment model. EZ-V2 successfully extends EfficientZero's high sample efficiency to various domains. To realize this extension, EZ-V2 addresses two pivotal questions:
\begin{itemize}
\item \textit{How to perform efficient planning using tree search in high-dimensional and continuous action spaces?}
\item \textit{How to further strengthen the ability to utilize stale transitions under limited data?}
\end{itemize}
Specifically, we propose a series of improvements. We construct a sampling-based tree search for policy improvement in continuous control. Furthermore, we propose a search-based value estimation method to alleviate off-policy issues in replaying stale interaction data. The differences compared to EfficientZero are detailed in Appendix \ref{summary_diff}.

\subsection{Policy Learning with Tree Search} 

The policy learning in EZ-V2 consists of two stages: (i) \textit{obtaining the target policy from tree search} and (ii) \textit{supervised learning using the target policy}. The tree search method we propose guarantees policy improvement as defined in Definition \ref{def:PI} and enhances the efficiency of exploration in a continuous action space. The training objective aims to refine the policy function by aligning it with the target policy obtained from the tree search.

\begin{definition}[\textbf{Policy Improvement}]
\label{def:PI}
\textit{A planning method over actions satisfies policy improvement if the following inequality holds at any given state \( s \).}
\begin{equation}
\label{PI}
     q(s,a^*_{S}) \geq \mathbb{E}_{a \sim p_t}[q(s,a)]
\end{equation}
\textit{where \( a^*_S \) is the recommended action from the planning method, \( p_t \) is the current policy, and \( q \) is the Q-value function with respect to \( p_t \).}
\end{definition}


\subsubsection{Target Policy from Tree Search}



In this paper, we choose the tree search method as the improvement operator, which can construct a locally superior policy over actions (policy improvement) based on a learned model. Each node of the search tree is associated with a state \( s \), and an edge is denoted as \( (s,a) \). The tree stores the estimated Q-value of each node and updates it through simulations. Finally, we select an action that strikes a balance between exploitation and exploration, based on the Q-value.

More specifically, the basic tree search method we adopt is the Gumbel search \citep{danihelka2021policy}. This method is recognized for its efficiency in tree searching and its guarantee of policy improvement. At the onset of a search process, the Gumbel search samples \( K \) actions using the Gumbel-Top-k trick (Section \ref{main_gumbel_topk}). It then approaches the root action selection as a bandit problem, aiming to choose the action with the highest Q-value. To evaluate the set of sampled actions, we employ a bandit algorithm known as Sequential Halving \citep{karnin2013almost}. However, the Gumbel search primarily investigates planning in discrete action spaces.




To support high-dimensional continuous control, we design a sampling-based Gumbel search, as depicted in Fig. \ref{framework} (B). Given the challenges posed by high-dimensional and large continuous action spaces, striking a balance between exploration and exploitation is crucial for the performance of tree search, especially when using a limited number of sampled actions. To address this challenge, we propose an action sampling method that not only achieves excellent exploration capabilities but also ensures that the search method satisfies policy improvement as defined in Definition \ref{def:PI}.

Our method is implemented as follows. Given any state \( s \), we sample \( K \) actions. A portion of these actions originates from the current policy \( p_t \), while another portion is sampled from a prior distribution \( p^\prime_t \). We denote the complete action set as \( A_S = [A_{S1}, A_{S2}] \), where \( A_{S1} \) and \( A_{S2} \) represent the two portions, respectively. This design enhances exploration because \( A_{S2} \) can introduce actions that have a low prior under the current policy \( p_t \). 
The bandit process then selects the action \( a^*_S \) from the action set \( A_S \) with the highest \( q(s,a) \), expressed as \( a^*_S = \arg\max_{a \in A_S}(q(s,a)) \). This process guarantees policy improvement as outlined in Definition \ref{def:PI}, because:
\begin{equation}
\label{q_pi}
\begin{split}
    q(s,a^*_S ) & \geq \max \left( \frac{\sum_{a \in A_{S1}} q(s,a)}{|A_{S1}|}, \frac{\sum_{a \in A_{S2}} q(s,a)}{|A_{S2}|}  \right) \\
    & \geq \frac{\sum_{a \in A_{S1}} q(s,a)}{|A_{S1}|} \\
    & = \mathbb{E}_{a\sim p_t }[q(s, a)] , \ \  \text{as} \ \  |A_{S1}| \rightarrow \infty
\end{split}
\end{equation}
where \( |A_{S1}| \) represents the number of actions in \( A_{S1} \). The first line holds because \( a^*_S \) is the action with the highest \( q(s,a) \) in \( A_S \) and \( [A_{S1}, A_{S2}] = A_S \). We apply the law of large numbers to transition from line 2 to 3.
In practical implementation, we model the current policy \( p_t \) as a Gaussian distribution, and the sampling distribution \( p^\prime_t \) is a flattened version of the current policy. Our experiments demonstrate that this design facilitates exploration in continuous control.


For action sampling at non-root nodes, we modify the sampling method to reduce the variance in the estimation of Q-values. Specifically, actions at non-root nodes are sampled solely from the current policy \( p_t \). Additionally, the number of actions sampled at non-root nodes is fewer than those at the root node. This reduction in the number of sampled actions at non-root nodes facilitates an increase in search depth, thereby avoiding redundant simulations on similar sampled actions.

Upon completing the sampling-based Gumbel search, we obtain the target policy. We construct two types of target policies. The first is the recommended action \( a^*_S \) obtained from the tree search. In addition to \( a^*_S \), the search also yields \( q(s,a) \) for the visited actions. To smooth the target policy, we build the second type as a probability distribution based on the Q-values of root actions (for more details, see Appendix \ref{cal_ip}).

\subsubsection{Learning using Target Policy}
In this part of the process, we distill the target policy \( \pi_t \) into the learnable policy function \( p_t \). We aim to minimize the cross-entropy between \( p_t \) and the target policy \( \pi_t \):
\begin{equation}
\label{ce_loss}
    \mathcal{L}_{\mathcal{P}}(p_t, \pi_t) = \mathbb{E}_{a \sim \pi_t}\left[-\log p_t (a) \right]
\end{equation}
Additionally, in high-dimensional action spaces, we utilize the other target \( a^*_S \):
\begin{equation}
    \label{simple_loss}
    \mathcal{L}_{\mathcal{P}}(p_t, a^*_{S})=-\log p_t\left(a^*_{S}\right)
\end{equation}
Compared with Equation \eqref{ce_loss}, Equation \eqref{simple_loss} facilitates early exploitation in tasks with a large action dimension, such as the `Quadruped walk' in DM Control \citep{tassa2018deepmind}. We provide an intuitive example to illustrate its advantage in Appendix \ref{simplepi}.


\subsection{Search-based Value Estimation}

Improving the ability to utilize off-policy data is crucial for sample-efficient RL. Sample-efficient RL algorithms often undergo drastic policy shifts within limited interactions, leading to estimation errors for early-stage transitions in conventional methods, such as \( N \)-step bootstrapping and TD-\( \lambda \). EfficientZero proposed an adaptive step bootstrapping method to alleviate the off-policy issue. 
However, this method utilizes the multi-step discount sum of rewards from an old policy, which can lead to inferior performance. Therefore, it is essential to enhance value estimation to better utilize stale transitions.

We propose leveraging the current policy and model to conduct value estimation, which we term \textbf{Search-Based Value Estimation (SVE)}. The expanding search tree generates imagined trajectories that provide bootstrapped samples for root value estimations. We now use the mean of these empirical estimations as target values. Notably, this value estimation method can be implemented within the same process as the policy reanalysis proposed by MuZero \citep{schrittwieser2021online}, thereby not introducing additional computational overhead.
The mathematical definition of SVE is as follows.
\begin{definition}[\textbf{Search-Based Value Estimation}]
\label{def:sve}
\textit{Using imagined states and rewards $\hat{s}_{t+1}, \hat{r}_t=\mathcal{G}(\hat{s}_t,\hat{a}_t)$ obtained from our learnable dynamic function, 
the value estimation of a given state $s_0$ can be derived from the empirical mean of $N$ bootstrapped estimations, which is formulated as
}
\begin{equation}
    \hat{V}_\text{S}(s_0)=\frac{\sum_{n=0}^{N}\hat{V}_n(s_0)}{N}
\end{equation}
\textit{where $N$ denotes the number of simulations, $\hat{V}_n(s_0)$ is the bootstrapped estimation of the $n$-th node expansion, which is formulated as }
\begin{equation}            \hat{V}_n(s_0)=\sum_{t=0}^{H(n)}\gamma^t\hat{r}_t+\gamma^{H(n)}\hat{V}(\hat{s}_{H(n)})
\end{equation}
\textit{where $H(n)$ denotes the search depth of the $n$-th iteration.}
\end{definition}

Through the imagined search process with the newest policy and model, SVE provides a more accurate value estimation for off-policy data. Furthermore, investigating the nature of estimation errors is critical. We derive an upper bound for the value estimation error, taking into account model errors, as illustrated in Theorem \ref{theorem:sve_error}.


\begin{corollary}[\textbf{Search-Based Value Estimation Error}]
\label{theorem:sve_error}
    Define $s_t,a_t,r_t$ to be the states, actions, and rewards resulting from current policy $\pi$ using true dynamics $\mathcal{G}^*$ and reward function $\mathcal{R}^*$, starting from $s_0\sim\nu$ and similarly define $\hat{s}_t, \hat{a}_t, \hat{r_t}$ using learned function $\mathcal{G}$. Let reward function $\mathcal{R}$ to be $L_r-Lipschitz$ and value function $\mathcal{V}$ as $L_V-Lipschitz$. Assume $\epsilon_s, \epsilon_r, \epsilon_v$ as upper bounds of state transition, reward, and value estimations respectively. We define the error bounds of each estimation as
    \begin{gather}
        \max_{n\in[N],t\in[H(n)]}\mathbb{E}\left[\Vert\hat{s}_t-s_t\Vert^2\right]\leq\epsilon_s^2 \\
        \max_{n\in[N],t\in[H(n)]}\mathbb{E}\left[\Vert\mathcal{R}(s_t)-\mathcal{R}^*(s_t)\Vert^2\right]\leq\epsilon_r^2 \\
        \max_{n\in[N],t\in[H(n)]}\mathbb{E}\left[\Vert\mathcal{V}(s_t)-\mathcal{V}^*(s_t)\Vert^2\right]\leq\epsilon_v^2
    \end{gather}
    within a tree-search process. Then we have errors
    \begin{equation}
    \begin{aligned}
        &\text{MSE}_\nu(\hat{V}_\text{S})\\
        &\leq\frac{4}{N^2}\sum_{n=0}^N\left(\sum_{t=0}^{H(n)}\gamma^{2t}(L_r^2\epsilon_s^2+\epsilon_r^2)+\gamma^{2H(n)}(L_V^2\epsilon_s^2+\epsilon_v^2)\right)
    \end{aligned}
    \end{equation}
    where $N$ is the simulation number of the search process. $H(n)$ denotes the depth of the $n$-th search iteration.
\end{corollary}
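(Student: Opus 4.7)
The plan is to reduce the MSE of $\hat{V}_S$ to a sum of per-simulation errors, reduce each per-simulation error to a combination of stepwise reward errors and a terminal value error, and finally control each of those using the Lipschitz hypotheses together with the assumed $L^2$ bounds $\epsilon_s,\epsilon_r,\epsilon_v$.

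First, I would write $\hat{V}_S(s_0) - V^*(s_0) = \frac{1}{N}\sum_{n=0}^{N}\bigl(\hat{V}_n(s_0) - V^*(s_0)\bigr)$ and square. The prefactor $1/N^2$ in the stated bound indicates that the per-simulation errors are to be treated as orthogonal (mean-zero, uncorrelated) across $n$, so that cross-terms vanish and $\text{MSE}_\nu(\hat{V}_S) = \frac{1}{N^2}\sum_{n}\mathbb{E}\bigl[\bigl(\hat{V}_n(s_0) - V^*(s_0)\bigr)^2\bigr]$. I would state this orthogonality explicitly and justify it by pointing to the independent stochasticity of the distinct search rollouts produced by the sampling-based Gumbel search.

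Second, I fix $n$ and let $r_t$ denote the true reward along the trajectory generated by the current policy under $\mathcal{G}^*$ starting from $s_0$. Because $\mathcal{V}^*$ is the true value at the truncation state, $V^*(s_0) = \mathbb{E}\bigl[\sum_{t=0}^{H(n)}\gamma^t r_t + \gamma^{H(n)}\mathcal{V}^*(s_{H(n)})\bigr]$, so the per-simulation error decomposes as $\hat{V}_n(s_0) - V^*(s_0) = \sum_{t=0}^{H(n)}\gamma^t(\hat{r}_t - r_t) + \gamma^{H(n)}\bigl(\mathcal{V}(\hat{s}_{H(n)}) - \mathcal{V}^*(s_{H(n)})\bigr)$. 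A single application of $(a+b)^2 \le 2a^2 + 2b^2$ separates the cumulative reward error from the terminal value error, contributing one factor of $2$. The same orthogonality argument applied across timesteps within a rollout then collapses $\mathbb{E}\bigl[\bigl(\sum_t\gamma^t e_t\bigr)^2\bigr]$ into $\sum_t\gamma^{2t}\mathbb{E}[e_t^2]$, which is exactly where the $\gamma^{2t}$ weighting in the stated bound originates.

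Third, for each stepwise reward error I would add and subtract $\mathcal{R}(s_t)$: the triangle inequality combined with the $L_r$-Lipschitz property of $\mathcal{R}$ yields $|\hat{r}_t - r_t| \le L_r\|\hat{s}_t - s_t\| + |\mathcal{R}(s_t) - \mathcal{R}^*(s_t)|$. Squaring, applying $(a+b)^2 \le 2a^2 + 2b^2$ a second time, and then inserting $\mathbb{E}\|\hat{s}_t - s_t\|^2 \le \epsilon_s^2$ and $\mathbb{E}|\mathcal{R}(s_t) - \mathcal{R}^*(s_t)|^2 \le \epsilon_r^2$ gives $\mathbb{E}[e_t^2] \le 2(L_r^2\epsilon_s^2 + \epsilon_r^2)$. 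The same argument with $L_V,\epsilon_v$ bounds the terminal value term by $2(L_V^2\epsilon_s^2 + \epsilon_v^2)$. The two applications of $(a+b)^2 \le 2a^2+2b^2$ together produce the overall prefactor $4$ that appears in the statement, and assembling the pieces reconstructs the claimed inequality.

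The main obstacle, and really the only nontrivial modelling assumption, is the orthogonality I need both across simulations and across timesteps within a simulation: without it, Cauchy--Schwarz would replace the $\gamma^{2t}$ weighting by a coarser $\bigl(\sum_t\gamma^t\bigr)\bigl(\sum_t\gamma^t\,\mathbb{E}[e_t^2]\bigr)$ bound and would degrade the $1/N^2$ into $1/N$. I would therefore explicitly tie the mean-zero/uncorrelated requirement to the independence of the sampling at non-root nodes of the sampling-based Gumbel search and to the implicit noise model behind the uniform $L^2$ bounds on $\hat{s}_t - s_t$, $\mathcal{R}(s_t)-\mathcal{R}^*(s_t)$, and $\mathcal{V}(s_t)-\mathcal{V}^*(s_t)$. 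Once that assumption is in place, the rest of the proof is routine bookkeeping with $(a+b)^2 \le 2a^2 + 2b^2$, the triangle inequality, and linearity of expectation.
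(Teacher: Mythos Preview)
Your proposal is correct and follows essentially the same route as the paper: decompose $\hat V_S-V_\pi$ into per-simulation errors, split each into accumulated reward error plus terminal value error, and bound each stepwise term via $(a+b)^2\le 2a^2+2b^2$ together with the Lipschitz assumptions and the $\epsilon_s,\epsilon_r,\epsilon_v$ bounds. The only difference is presentational: what you isolate and justify as an explicit orthogonality assumption across simulations and timesteps, the paper handles in a single line by declaring that ``the model inference errors are additive per step'' before passing to the $\gamma^{2t}$-weighted sum.
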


The detailed proof can be found in Appendix \ref{app:theorem:sve_error}. SVE possesses several advantageous properties, such as a convergent series coefficient and bounded model errors. Intuitively, the estimation error bound will converge to 0 when the dynamic function approaches optimality, denoted as \(\epsilon \to 0\).


Theorem \ref{theorem:sve_error} shows that model inaccuracies can amplify SVE's estimation error, especially in the early training stages or when sampling fresh transitions. To address this, we introduce a mixed value target, combining multi-step TD-targets for early training and fresh experience sampling. The mixed target is defined as:
\begin{equation}
    V_{mix}=\begin{cases}
        \sum_{i=0}^{l-1}\gamma^i u_{t+i}+\gamma^l v_{t+l}, & \text{if } i_t<T_1 \  \\ &\text{or} \ i_s>|D|-T_2\\
        \hat{V}_\text{S}, & \text{otherwise}
    \end{cases}
\end{equation}
Here, \( l \) is the horizon for the multi-step TD method. The variable \( i_t \) denotes the current training step, while \( T_1 \) refers to the initial steps. The term \( i_s \) indicates the sampling index from the buffer. The buffer size is represented by \( |D| \), and \( T_2 \) is the designated threshold for assessing the staleness of data. More details can be found in Appendix \ref{mixedV}.

\begin{table*}[ht]
    \caption{Scores achieved on the Atari 100k benchmark indicate that EZ-V2 achieves super-human performance within just 2 hours of real-time gameplay. Our method surpasses the previous state-of-the-art, EfficientZero. The results for Random, Human, SimPLe, CURL, DrQ, SPR, MuZero, and EfficientZero are sourced from \citep{ye2021mastering}.}
    \label{tab:atari_results_full}
\begin{center}
\begin{small}
\centering
\scalebox{0.8}{
\centering
\begin{tabular}{lccccccccccr}
\toprule
Game &                  Random &    Human &   SimPLe &     CURL &      DrQ &     SPR & MuZero & EfficientZero & DreamerV3 & BBF & \textbf{EZ-V2 (Ours)}\\
\midrule
Alien               &    227.8 &   7127.7 &    616.9 &   558.2 &    771.2 &   801.5 & 530.0 & 808.5 & 959 & \underline{1173.2} & \textbf{1557.7} \\
Amidar              &      5.8 &   1719.5 &     88.0 &   142.1 &    102.8 &   176.3 & 38.8 & 148.6 & 139 & \textbf{244.6} & \underline{184.9} \\
Assault             &    222.4 &   742.0 &    527.2 &   600.6 &    452.4 &   571.0 & 500.1 & 1263.1 & 706 & \textbf{2098.5} & \underline{1757.5} \\
Asterix             &    210.0 &   8503.3 &   1128.3 &   734.5 &    603.5 &   977.8 & 1734.0 & \underline{25557.8} & 932 & 3946.1 & \textbf{61810.0} \\
Bank Heist          &     14.2 &    753.1 &     34.2 &   131.6 &    168.9 &   380.9 & 192.5 & 351.0 & 649 & \underline{732.9} & \textbf{1316.7} \\
BattleZone          &   2360.0 &  37187.5 &   5184.4 &  14870.0 &  12954.0 & 16651.0 & 7687.5 & 13871.2 & 12250 & \textbf{24459.8} & \underline{14433.3} \\
Boxing              &      0.1 &     12.1 &      9.1 &     1.2 &      6.0 &    35.8 & 15.1 & 52.7 & 52.7 & \textbf{85.8} & \underline{75.0} \\
Breakout            &      1.7 &     30.5 &     16.4 &     4.9 &     16.1 &    17.1 & 48.0 & \textbf{414.1} & 31 & 370.6 & \underline{400.1} \\
ChopperCmd           &    811.0 &   7387.8 &   1246.9 &  1058.5 &    780.3 &   974.8 & \underline{1350.0} & 1117.3 & 420 & \textbf{7549.3} & 1196.6 \\
Crazy Climber       &  10780.5 &  35829.4 &  62583.6 & 12146.5 &  20516.5 & 42923.6 & 56937.0 & 83940.2 & \underline{97190} & 58431.8 & \textbf{112363.3} \\
Demon Attack        &    152.1 &   1971.0 &    208.1 &   817.6 &    1113.4&   545.2 & 3527.0 & 13003.9 & 303 & \underline{13341.4} & \textbf{22773.5} \\
Freeway             &      0.0 &     29.6 &     20.3 &    \textbf{26.7} &      9.8 &    24.4 & 21.8 & 21.8 & 0 & \underline{25.5} & 0.0 \\
Frostbite           &     65.2 &   4334.7 &    254.7 &  1181.3 &    331.1 &  \underline{1821.5} & 255.0 & 296.3 & 909 & \textbf{2384.8} & 1136.3 \\
Gopher              &    257.6 &   2412.5 &    771.0 &   669.3 &    636.3 &   715.2 & 1256.0 & 3260.3 & \underline{3730} & 1331.2 & \textbf{3868.7} \\
Hero                &   1027.0 &  30826.4 &   2656.6 &  6279.3 &   3736.3 &  7019.2 & 3095.0 & 9315.9 & \textbf{11161} & 7818.6 & \underline{9705.0} \\
Jamesbond           &     29.0 &    302.8 &    125.3 &   \underline{471.0} &    236.0 &   365.4 & 87.5 & 517.0 & 445 & \textbf{1129.6} & 468.3 \\
Kangaroo            &     52.0 &   3035.0 &    323.1 &   872.5 &    940.6 &  3276.4 & 62.5 & 724.1 & \underline{4098} & \textbf{6614.7} & 1886.7 \\
Krull               &   1598.0 &   2665.5 &   4539.9 &  4229.6 &   4018.1 &  3688.9 & 4890.8 & 5663.3 & 7782 & \underline{8223.4} & \textbf{9080.0}\\
Kung Fu Master      &    258.5 &  22736.3 &  17257.2 &  14307.8 &   9111.0 & 13192.7 & 18813.0 & \textbf{30944.8} & 21420 & 18991.7 & \underline{28883.3} \\
Ms Pacman           &    307.3 &   6951.6 &   1480.0 &   1465.5 &    960.5 &  1313.2 & 1265.6 & 1281.2 & 1327 & \underline{2008.3} & \textbf{2251.0} \\
Pong                &    -20.7 &     14.6 &     12.8 &     -16.5 &     -8.5 &    -5.9 & -6.7 & \underline{20.1} & 18 & 16.7 & \textbf{20.8} \\
Private Eye         &     24.9 &  69571.3 &     58.3 &   \underline{218.4} &    -13.6 &   124.0 & 56.3 & 96.7 & \textbf{882} & 40.5 & 99.8 \\
Qbert               &    163.9 &  13455.0 &   1288.8 &   1042.4 &    854.4 &   669.1 & 3952.0 & \underline{13781.9} & 3405 & 4447.1 & \textbf{16058.3} \\
Road Runner         &     11.5 &   7845.0 &   5640.6 &  5661.0 &   8895.1 & 14220.5 & 2500.0 & 17751.3 & 15565 & \textbf{33426.8} & \underline{27516.7} \\
Seaquest            &     68.4 &  42054.7 &    683.3 &   384.5 &    301.2 &   583.1 & 208.0 & 1100.2 & 618 & \underline{1232.5} & \textbf{1974.0} \\
Up N Down           &    533.4 &  11693.2 &   3350.3 &  2955.2 &   3180.8 & \textbf{28138.5} & 2896.9 & \underline{17264.2} & - & 12101.7 & 15224.3 \\
\midrule
Normed Mean         &    0.000 &    1.000 &    0.443 &    0.381 &    0.357 &   0.704 & 0.562 & 1.945 & 1.120 & \underline{2.247} & \textbf{2.428} \\
Normed Median       &    0.000 &    1.000 &    0.144 &    0.175 &    0.268 &   0.415 & 0.227 & \underline{1.090} & 0.490 & 0.917 & \textbf{1.286} \\
\bottomrule
\end{tabular}
}
\end{small}
\end{center}
\end{table*}

\begin{table*}[ht]
    \caption{Scores achieved on the Proprio Control 50/100k and Vision Control 100/200k benchmarks (with 3 seeds run for each) demonstrate that EZ-V2 consistently maintains sample efficiency, whether with proprioceptive or visual inputs. The tasks are categorized into easy and hard groups as proposed by \citep{hubert2021learning}. The results of DreamerV3 are sourced from the official data \citep{hafner2023mastering}.}
    \label{tab:dmc_results_full}
\begin{center}
\begin{small}
\centering
\scalebox{0.85}{
\centering
\begin{tabular}{l|cccc|cccc}
\toprule
\multicolumn{1}{l|}{\large Benchmark} & \multicolumn{4}{c|}{\large Proprio Control 50k} & \multicolumn{4}{c}{\large Vision Control 100k} \\
\midrule
Task &   SAC &  TD-MPC2 &   DreamerV3 & \textbf{EZ-V2 (Ours)} &   CURL &  DrQ-v2 &    DreamerV3 & \textbf{EZ-V2 (Ours)}  \\
\midrule
Cartpole Balance             &      \textbf{997.6} &   \underline{962.8}&     839.6 &    947.3 &    \underline{963.3} &    \textbf{965.5} &   956.4 & 911.7  \\
Cartpole Balance Sparse             &    \underline{993.1} &    942.8 &    559.0 &   \textbf{999.2} &    \underline{999.4} &    \textbf{1000.0} &   813.0 & 951.5 \\
Cartpole Swingup            &    \textbf{861.6} &   \underline{826.7} &   527.7 &   805.4 &    \textbf{765.4} &    \underline{756.0} &   374.8 & 747.8  \\
Cup Catch              &      949.9 &    \textbf{976.0} &      729.6&     \underline{969.8} &      932.3&      468.0 &    \underline{947.7} & \textbf{954.7}  \\
Finger Spin            &      \underline{900.0} &     \textbf{965.8} &    765.8 &     837.1 &      \underline{850.2} &     459.4 &    633.2 & \textbf{927.6}  \\
Pendulum Swingup           &     158.9 &   520.1&    \textbf{830.4} &   \underline{825.4} &   144.1 &    233.3 &  \underline{619.3} &  \textbf{726.7}  \\
Reacher Easy           &     744.0 &    \underline{903.5} &    693.4 &   \textbf{940.3} &    467.9 &    \underline{722.1} &   441.4 & \textbf{946.3} \\
Reacher Hard           &     646.5 &   580.4 &    \underline{768.0} &   \textbf{795.4}  &    112.7 &    \underline{202.9} &  120.4 & \textbf{961.5} \\
Walker Stand      &   870.0 &  \textbf{973.9} &  767.3 &  \underline{953.6} &  733.8 &   426.1 & \underline{939.5} & \textbf{944.9}  \\
Walker Walk           &    813.2 &   \textbf{965.5} &   475.2 &   \underline{944.0} &   538.5 &   681.5 &  \underline{771.2} & \textbf{888.8}  \\
\midrule
\multicolumn{1}{l|}{} & \multicolumn{4}{c|}{\large Proprio Control 100k} & \multicolumn{4}{c}{\large Vision Control 200k} \\
\midrule
Acrobot Swingup               &    44.1 &   \textbf{303.2}&    62.8 &   \underline{297.7} &    6.8 &    15.1 &   \underline{67.4} & \textbf{231.8} \\
Cartpole Swingup Sparse         &     256.6 &    \underline{421.4} &     172.7 &   \textbf{795.4} &    8.8 &    81.2 &   \underline{392.4} & \textbf{763.6} \\
Cheetah Run          &   \textbf{680.9} &  614.4 &   400.8 &  \underline{677.8} &  405.1 &  418.4 & \underline{587.3} & \textbf{631.6}  \\
Finger Turn Easy           &    \underline{630.8} &   \textbf{793.3} &   560.5 &  310.7 & \underline{371.5} &    286.8 &   366.6 & \textbf{799.2}  \\
Finger Turn Hard      &  414.0 &  \textbf{604.8} &  \underline{474.2} & 374.1 &  236.3 &  \underline{268.4} & 258.5 & \textbf{794.6} \\
Hopper Hop        &    0.1 &   \underline{84.5} &    9.7 &   \textbf{186.5} &    \underline{84.5}&   26.3&   76.3 & \textbf{206.4} \\
Hopper Stand             &      3.8 &     \textbf{807.9} &     296.1 &    \underline{795.4} &     627.7 &      290.2 &    \underline{652.5} & \textbf{805.7} \\
Quadruped Run              &    139.7 &   \textbf{742.1} &   289.0  &   \underline{510.6} &    170.9 &    \underline{339.4} &   168.0 & \textbf{384.8}  \\
Quadruped Walk               &   237.5 & \underline{853.7} &   256.2 &  \textbf{925.8} &   131.8 &   \underline{311.6} &  122.6 & \textbf{433.3} \\
Walker Run              &   635.4 &   \textbf{780.5} &   478.9 &  \underline{657.2} &   274.7 &   359.9 &  \textbf{618.2} & \underline{475.3}\\
\midrule
\multicolumn{1}{l|}{Mean}         &    552.0 &    \textbf{740.9} &    517.1 &    \underline{723.2} &   437.3 &    410.3 &   \underline{498.5} & \textbf{726.1}  \\
\multicolumn{1}{l|}{Median}        &    633.3 &    \textbf{806.4} &    543.4 &    \underline{800.4} &   324.9 &    330.6 &  \underline{484.5} & \textbf{788.1}    \\

\bottomrule
\end{tabular}
}
\end{small}
\end{center}
\end{table*}

\section{Experiment}
In this section, we aim to evaluate the general sample efficiency of EZ-V2 on a total of 66 diverse tasks.
The tasks include scenarios with low and high-dimensional observation, discrete and continuous action spaces, and dense and sparse rewards. We then present an ablation study on the sampling-based Gumbel search and mixed value target we propose.\footnote{The code is released at \href{https://github.com/Shengjiewang-Jason/EfficientZeroV2}{https://github.com/Shengjiewang-Jason/EfficientZeroV2}.} 

\subsection{Experimental Setup}

To assess sample efficiency, we measure algorithm performance with limited environment steps. In discrete control, we use the \textbf{Atari 100k} benchmark \citep{1606.01540}, encompassing 26 Atari games and limiting training to 400k environment steps, equivalent to 100k steps with action repeats of 4. 

For continuous control evaluation, we utilize the DeepMind Control Suite (DMControl; \citep{tassa2018deepmind}), comprising various tasks in classical control, locomotion, and manipulations. Referring to the categorizations in Sample MuZero \citep{hubert2021learning}, tasks are divided into \textbf{easy} and \textbf{hard} categories. The easy tasks use half the interaction data of the hard tasks. We establish the following benchmarks:
\begin{itemize}
    \item \textbf{Proprio Control 50k} for easy tasks with low-dimensional state inputs.
    \item \textbf{Proprio Control 100k} for hard tasks with low-dimensional state inputs.
    \item \textbf{Vision Control 100k} for easy tasks with image observations.
    \item \textbf{Vision Control 200k} for hard tasks with image observations.
\end{itemize}
Each benchmark includes 10 tasks. Action repeats are set to 2 and the maximum episode length is 1000 for all 4 benchmarks, in line with previous studies \citep{hafner2023mastering, Anonymous2023TDMPC2}.




%

We choose strong baselines for each domain, which include \textbf{SAC} \citep{haarnoja2018soft}, \textbf{DrQ-v2} \citep{yarats2021mastering}, \textbf{TD-MPC2} \citep{Anonymous2023TDMPC2}, \textbf{DreamerV3} \citep{hafner2023mastering}, \textbf{EfficientZero} \citep{ye2021mastering}, and \textbf{BBF} \citep{schwarzer2023bigger}. For more details on the implementation, please refer to Appendix \ref{hyper-param}.


\subsection{Comparison with Baselines}

\textbf{Atari 100k}: The performance of EZ-V2 on the Atari 100k benchmark is elaborated in Table \ref{tab:atari_results_full}. When scores are normalized against those of human players, EZ-V2 attains a mean score of 2.428 and a median score of 1.286, surpassing the previous state-of-the-art, BBF \citep{schwarzer2023bigger} and EfficientZero \citep{ye2021mastering}. 
In contrast to BBF, our method employs fewer network parameters and a lower replay ratio.
Such enhancements in performance and computational efficiency are attributed to the learning of the environment model and the implementation of Gumbel search in action planning.
Moreover, EZ-V2 necessitates fewer search simulations compared to EfficientZero and still manages to achieve superior performance.
The utilization of a mixed value target decisively mitigates off-policy issues associated with using outdated data, marking a substantial advancement beyond the adaptive step bootstrapping target used by EfficientZero.



\textbf{Proprio Control}: 
The results in Table \ref{tab:dmc_results_full} showcase that our method achieves a mean score of 723.2 across 20 tasks with limited data. While the performance of the current state-of-the-art, TD-MPC2, is comparable to that of EZ-V2, our method achieves faster inference times. TD-MPC2's planning with MPPI involves predicting 9216 latent states to attain similar performance levels. In contrast, EZ-V2's tree search-based planning only utilizes 32 imagined latent states, resulting in lighter computational demands.


\textbf{Vision Control}:
As shown in Table \ref{tab:dmc_results_full}, our method achieves a mean score of 726.1, surpassing the previous state-of-the-art, DreamerV3, by 45\%. Notably, it sets new records in 16 out of 20 tasks. Furthermore, our method demonstrates significant improvements in tasks with sparse rewards, as shown in Fig. \ref{dmc_vision}. For instance, in the `Cartpole-Swingup-Sparse' task, our method scores 763.6 compared to DreamerV3's 392.4. This substantial progress is attributed to two key algorithmic modifications: the planning with tree search, which ensures policy improvement and offers excellent exploratory capabilities, and the mixed value target, which enhances the accuracy of value learning, especially with stale data.

As a general and sample-efficient RL framework, EZ-V2 consistently demonstrates high sample efficiency in tasks featuring low and high-dimensional observations, discrete and continuous action spaces, and both dense and sparse reward structures. Detailed training curves can be found in Appendix \ref{train_curves}.

\subsection{Ablation Study}
\label{ablation_sec}

In this section, we discuss the effectiveness of the main modifications: the sampling-based Gumbel search and the mixed value target. 

\textbf{Ablations of Search}: The comparative analysis between our search method and Sample MCTS is illustrated in Fig. \ref{ablation_search_main}. Sample MCTS, a tree search technique tailored for continuous control developed by Sample MuZero \citep{hubert2021learning}, is depicted in green. Our search method, highlighted in red, exhibits superior performance. Whereas Sample MCTS necessitates $n = 50$ simulations, our approach significantly reduces the computational burden to merely 32 simulations.

Further, we display performance curves for 16 and 8 simulations. As demonstrated in Fig. \ref{ablation_search_main}, an increased number of simulations enhances performance in complex tasks such as `Quadruped Walk', and notably, our method with merely 8 simulations surpasses Sample MCTS. This shows that the sampling-based Gumbel search achieves a superior balance between exploration and exploitation, backed by guaranteed policy improvement. Additional results for other tasks are provided in Appendix  \ref{ablation}.

\begin{figure}[t]
\centering
\includegraphics[width=0.45\textwidth]{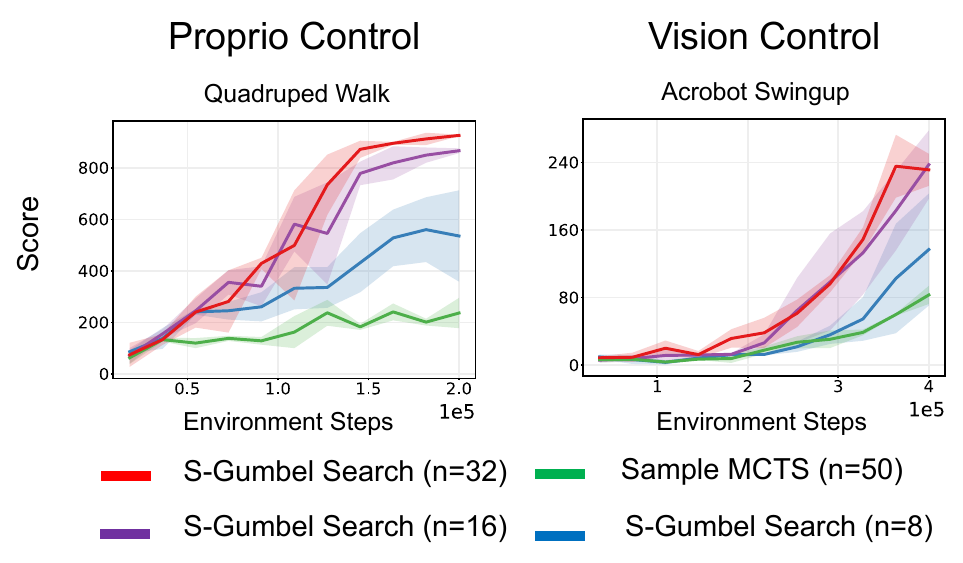}
\caption{
Ablation study of our search method, namely the sampling-based Gumbel search (S-Gumbel search). We compare it with our search method with different numbers of simulations (n=16, 8) and Sample MCTS \citep{hubert2021learning}. Our method outperforms Sample MCTS, and increasing the number of simulations improves our method's performance on hard tasks.}
\label{ablation_search_main}
\end{figure}

\begin{figure}[t]
\centering
\includegraphics[width=0.45\textwidth]{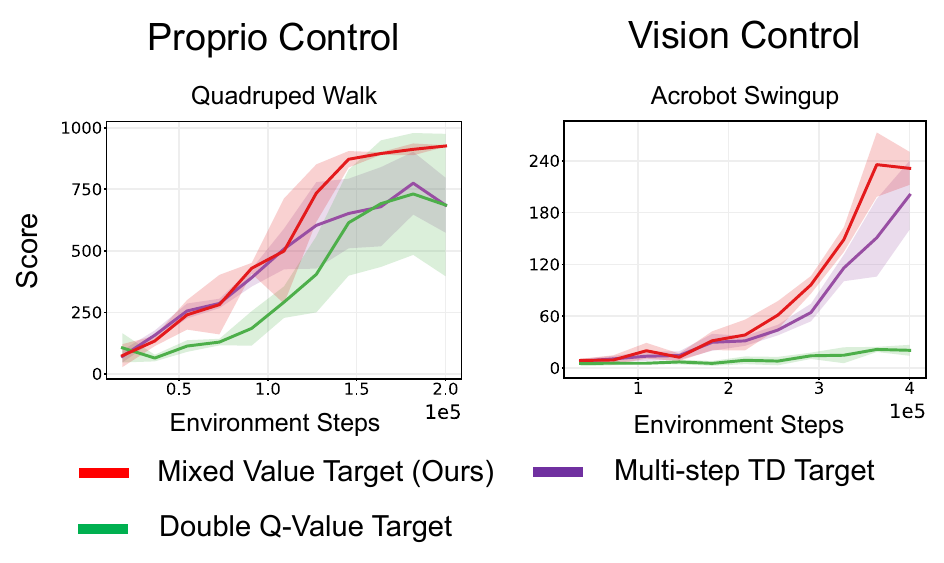}
\caption{
Ablation study of our value target, known as the mixed value target. We compare it with different value targets, including the multi-step TD target and the double Q-value target. The mixed value target consistently achieves high performance in both Proprio Control and Vision Control tasks.}
\label{ablation_value_main}
\vspace{-10pt}
\end{figure}
\vspace{-5pt}

\textbf{Ablations of Value Targets}. 
It can be seen in Fig. \ref{ablation_value_main} that our method (colored red) alleviates the off-policy issue compared with the multi-step TD target (colored purple), thus achieving better performance. Furthermore, we also compare with the value target 
$z_t$ derived from the optimal-Q Bellman equation, shown as follows:
\begin{equation}
    z_t = u_t + \gamma q(s_{t+1},a_{t+1}), \ \ a_{t+1} \sim p_t(a|s_{t+1})
\end{equation}
This technique also addresses the off-policy issue as the value target is based on the optimal Q-value estimation. Meanwhile, we employ the double Q-head trick. This estimation method is denoted as double Q-value target in Fig. \ref{ablation_value_main}.
Although the double Q-value target (colored green) also avoids the off-policy issue, the experiments illustrate that our method (colored red) exhibits consistent and robust performance across tasks. 
This is because our method matches the true value more rapidly by utilizing multi-step predicted rewards during the tree search.
Additional curves related to value ablation experiments can be found in Appendix \ref{ablation}.

Furthermore, practical comparisons between the TD-MPC2 and EZ-V2 algorithms in terms of computational load are provided in Appendix \ref{load}.



\section{Conclusion}
This paper presents EfficienctZero V2 (EZ-V2), a general framework for sample efficient RL. EZ-V2, which is built upon EfficientZero, extends to master continuous control. Furthermore, EZ-V2 achieves both superior performance and sample efficiency in tasks with visual and low-dimensional inputs. More specifically, EZ-V2 outperforms Dreamer V3 by a large margin across various types of domains, including Atari 100k, Proprio Control, and Vision Control benchmarks.  
Moreover, we evaluate the performance of EZ-V2 without conditions of limited data. EZ-V2 outperforms or is comparable to baselines with more interaction data, though the performance gap between EZ-V2 and other algorithms narrows as the amount of interaction data increases. 
Due to the high sample efficiency, we will broaden our evaluation of EZ-V2 across a wider variety of benchmarks, especially in real-world tasks like robotic manipulation. The superior sample efficiency of EZ-V2 holds great promise for enhancing online learning in real-world robotic scenarios.



\section*{Acknowledgements}

This work is supported by the Ministry of Science and Technology of the People's Republic of China, the 2030 Innovation Megaprojects ``Program on New Generation Artificial Intelligence'' (Grant No. 2021AAA0150000).  This work is also supported by the National Key R\&D Program of China (2022ZD0161700).

\section*{Impact Statement}

This paper presents work whose goal is to advance the field of 
Machine Learning. There are many potential societal consequences of our work, none which we feel must be specifically highlighted here.

\nocite{langley00}

\bibliography{example_paper}
\bibliographystyle{icml2023}

\newpage
\onecolumn
\appendix
\section{Summary of Differences}
\label{summary_diff}
EfficientZero V2 builds upon EffcientZero algorithm \citep{ye2021mastering}. This section demonstrates the major applied changes to achieve mastering performance across domains.
\begin{itemize}
    \item \textbf{Search}: Different from the MCTS in EfficientZero, we employ Gumbel search which differs in action selections. Gumbel search guarantees policy improvement even if with limited simulation budgets, which significantly reduces the computation of EZ-V2.
    \item \textbf{Search-based Value Estimation}: Compared to the adaptive TD method used in EfficientZero's value estimation, we employ the empirical mean of the search root node as the target value. The search process utilizes the current model and policy to calculate improved policy and value estimation, which can improve the utilization of early-stage transitions.
    \item \textbf{Gaussian Policy}: Inspired by Sampled MuZero \citep{hubert2021learning}, we employ an Gaussian distribution, which is parameterized by the learnable policy function, to represent the policy in continuous action spaces. We generate search action candidates by simply sampling from the Gaussian policy, which naturally satisfies the sampling without replacement in Gumbel search. We then prove the policy improvement of Gumbel search still holds in the continuous setting.
    \item \textbf{Action Embedding}: We employ an action embedding layer to encode the real actions as latent vectors. By representing actions in a hidden space, actions that are similar to each other are placed closer in the embedding space. This proximity allows the RL agent to generalize its learning from one action to similar actions, improving its efficiency and performance.
    \item \textbf{Priority Precalculation}: Conventionally, the priorities of a newly collected trajectory are set to be the maximal priority of total collected transitions. We propose to warm up the new priorities by calculating the bellman error using the current model. This increases the probability of newly collected transitions being replayed, thereby improving sample efficiency.
    \item \textbf{Architecture}:
    For the 2-dimensional image inputs, we follow the most implementation of the architecture of EfficientZero. For the continuous control task with 1-dimensional input, we use a variation of the previous architecture in which all convolutions are replaced by fully connected layers. The details could be found in Appendix \ref{arch}.
    \item \textbf{Hyperparameters}: We tuned the hyperparameters of EfficientZero V2 to achieve satisfying performance across various domains. The generality is verified by training from scratch without further adjustments, including Atari 100k, Proprio Control, Vision Control. More details refer to Appendix \ref{hyper-param}.
\end{itemize}

\section{Calculation of Target Policy}
\label{cal_ip}

In discrete control, the calculation of the target policy is the same as that of the original Gumbel search, which can be found in Gumbel Muzero \citep{danihelka2021policy}.
In a continuous setting, we modify the calculation of target policy as follows.
\begin{equation}
    \pi^{\prime}=\text{softmax}(\sigma(\text{completedQ}))
\end{equation}
where completedQ is a comprehensive value estimation of visited and unvisited candidates. For visited nodes, the completedQ is calculated via the empirical estimation as $q(a)=r(s,a)+\gamma v(s^\prime)$, $v(s^\prime)$ is the empirical mean of bootstrapped value sum and visit counts. For non-visits, $\text{completedQ}$ is estimated through the weighted average Q-value of visited nodes as $\sum_{a}\pi(a)q(a)$.  

We mention the transformation $\sigma$ is a monotonically increasing transformation in the main text. For a concrete instantiation of $\sigma$, we use
\begin{equation}
\sigma(q(s,a))=\left(c_{\text {visit }}+\max _b N(b)\right) c_{\text {scale }}q(s,a)
\end{equation}
where $\max _b N(b)$ is the visit count of the most visited action. $c_{\text {visit }}$ and $c_{\text {scale }}$ are 50 and 0.1 respectively.

\section{Advantage of Simple Policy Loss}
\label{simplepi}

Different from minimizing the cross-entropy between the policy with the target policy, the simple loss using $a^*_S$ directly improves the possibility of the recommended action in tree search. That means we can make the policy network output a good action in the early stage. With the iteration repeating, the policy network can find the optimal point. More specifically, we provide an intuitive example to illustrate that the policy can reach the optimal point more efficiently, as shown in Fig. \ref{appb:simplepi}. The whole space represents the action space. The simple loss enforces the policy to output the current best point colored red. The original cross-entropy loss considers all actions and thus makes the policy's output near the point colored brown. We can see that when the action space is large, the red point can guide the policy to reach the optimal point colored purple more quickly.

\begin{figure}[h]
\centering
\includegraphics[width=0.5\textwidth]{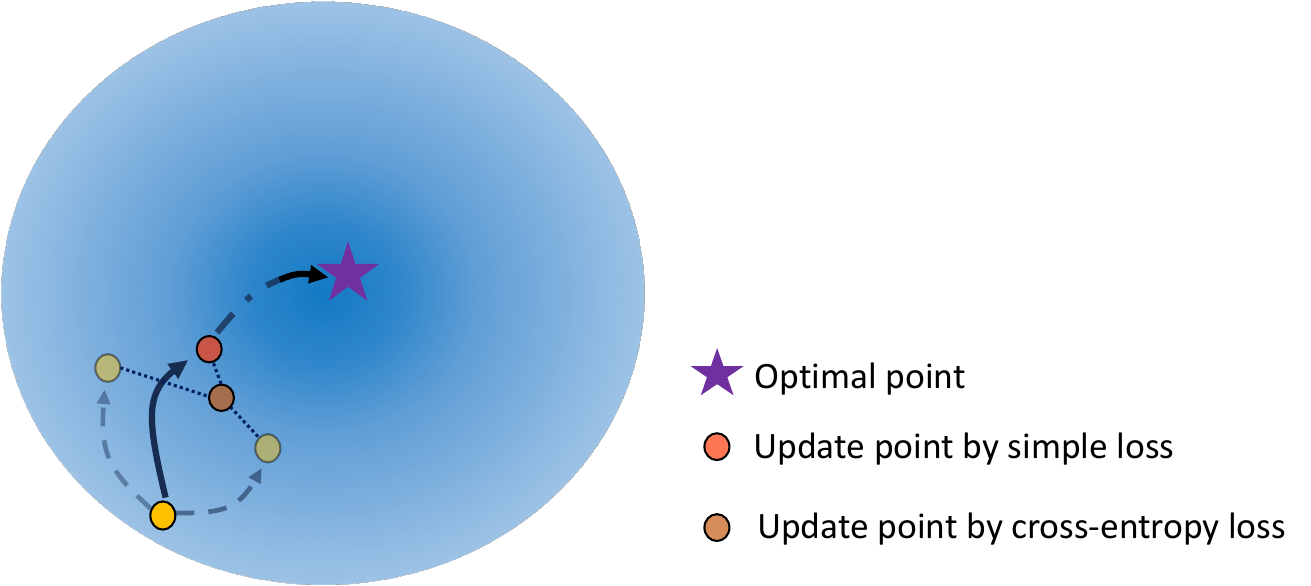}
\caption{Intuitive example showing the difference between simple policy loss using $a^*_S$ and cross-entropy loss.}
\label{appb:simplepi}
\end{figure}

\section{Details of Search-based Value Estimation (SVE)}
\subsection{Calculation of SVE}
\label{cal_sve}
First, the search process expands a tree through $N$ simulations gradually. At each simulation, the agent dives to a leaf and expands a new child. The diving path is easily associated with an $H(n)$-step rollout, which forms an imagined H(n)-step value estimation for the root node. This estimation process will be repeated $N$ times, to get an average estimation, as defined in Definition \ref{app:def:sve}.

\begin{definition}[\textbf{Search-Based Value Estimation}]
\label{app:def:sve}
\textit{Using imagined states and rewards $\hat{s}_{t+1}, \hat{r}_t=\mathcal{G}(\hat{s}_t,\hat{a}_t)$ obtained from our learnable dynamic function, 
the value estimation of a given state $s_0$ can be derived from the empirical mean of $N$ bootstrapped estimations, which is formulated as
}
\begin{equation}
    \hat{V}_\text{S}(s_0)=\frac{\sum_{n=0}^{N}\hat{V}_n(s_0)}{N}
\end{equation}
\textit{where $N$ denotes the number of simulations, $\hat{V}_n(s_0)$ is the bootstrapped estimation of the $n$-th node expansion, which is formulated as }
\begin{equation}            \hat{V}_n(s_0)=\sum_{t=0}^{H(n)}\gamma^t\hat{r}_t+\gamma^{H(n)}\hat{V}(\hat{s}_{H(n)})
\end{equation}
\textit{where $H(n)$ denotes the search depth of the $n$-th iteration.}
\end{definition}

\section{Proof for SVE Error Bound}
\begin{corollary}[\textbf{Search-Based Value Estimation Error}]
\label{app:theorem:sve_error}
    Define $s_t,a_t,r_t$ to be the states, actions, and rewards resulting from current policy $\pi$ using true dynamics $\mathcal{G}^*$ and reward function $\mathcal{R}^*$, starting from $s_0\sim\nu$ and similarly define $\hat{s}_t, \hat{a}_t, \hat{r_t}$ using learned function $\mathcal{G}$. Let reward function $\mathcal{R}$ to be $L_r-Lipschitz$ and value function $\mathcal{V}$ as $L_V-Lipschitz$. Assume $\epsilon_s, \epsilon_r, \epsilon_v$ as upper bounds of state transition, reward, and value estimations respectively. We define the error bounds of each estimation as
    \begin{gather}
        \max_{n\in[N],t\in[H(n)]}\mathbb{E}\left[\Vert\hat{s}_t-s_t\Vert^2\right]\leq\epsilon_s^2 \\
        \max_{n\in[N],t\in[H(n)]}\mathbb{E}\left[\Vert\mathcal{R}(s_t)-\mathcal{R}^*(s_t)\Vert^2\right]\leq\epsilon_r^2 \\
        \max_{n\in[N],t\in[H(n)]}\mathbb{E}\left[\Vert\mathcal{V}(s_t)-\mathcal{V}^*(s_t)\Vert^2\right]\leq\epsilon_v^2
    \end{gather}
    within a tree-search process. Then we have errors
    \begin{equation}
    \begin{aligned}
        \text{MSE}_\nu(\hat{V}_\text{S})\leq\frac{4}{N^2}\sum_{n=0}^N\left(\sum_{t=0}^{H(n)}\gamma^{2t}(L_r^2\epsilon_s^2+\epsilon_r^2)+\gamma^{2H(n)}(L_V^2\epsilon_s^2+\epsilon_v^2)\right)
    \end{aligned}
    \end{equation}
    where $N$ is the simulation number of the search process. $H(n)$ denotes the depth of the $n$-th search iteration.
\end{corollary}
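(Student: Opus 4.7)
My plan is to decompose the mean-square error along three axes in sequence: the outer average over the $N$ simulations, a Bellman-style telescoping of each single rollout against the true value $V^*$, and a further split of each one-step prediction error into a model-accuracy piece and a network-approximation piece. Writing $\hat{V}_{\text{S}}(s_0) - V^*(s_0) = \frac{1}{N}\sum_{n=0}^N \bigl(\hat{V}_n(s_0) - V^*(s_0)\bigr)$ and applying the Cauchy--Schwarz bound $\bigl(\sum_n a_n\bigr)^2 \leq (N+1)\sum_n a_n^2$ produces the $1/N^2$-type prefactor together with $\sum_{n=0}^N \mathbb{E}\bigl[(\hat{V}_n - V^*)^2\bigr]$, so the problem reduces to bounding each per-simulation squared error.

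For a single rollout I would invoke the Bellman identity for $V^*$ along the true trajectory from $s_0$ to write
\begin{equation*}
\hat{V}_n(s_0) - V^*(s_0) = \sum_{t=0}^{H(n)} \gamma^t (\hat{r}_t - r_t) + \gamma^{H(n)}\bigl(\hat{V}(\hat{s}_{H(n)}) - V^*(s_{H(n)})\bigr),
\end{equation*}
and apply $(a+b)^2 \leq 2a^2 + 2b^2$ to separate the accumulated reward error from the terminal value error (first factor of $2$). The reward sum I would reduce to the expected diagonal $\sum_{t=0}^{H(n)} \gamma^{2t}\,\mathbb{E}[(\hat{r}_t - r_t)^2]$, and the terminal piece I would keep as $\gamma^{2H(n)}\,\mathbb{E}\bigl[(\hat{V}(\hat{s}_{H(n)}) - V^*(s_{H(n)}))^2\bigr]$.

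The innermost step is a second Lipschitz decomposition: I would write $\hat{r}_t - r_t = [\mathcal{R}(\hat{s}_t) - \mathcal{R}(s_t)] + [\mathcal{R}(s_t) - \mathcal{R}^*(s_t)]$ and apply $(a+b)^2 \leq 2a^2 + 2b^2$ a second time, using $L_r$-Lipschitzness of $\mathcal{R}$ on the first bracket and the hypothesized error bounds $\epsilon_s,\epsilon_r$ to conclude $\mathbb{E}[(\hat{r}_t - r_t)^2] \leq 2(L_r^2\epsilon_s^2 + \epsilon_r^2)$; the $L_V,\epsilon_v$ argument handles the terminal value error identically. Compounding the two AM--QM applications yields the stated factor $4 = 2\cdot 2$, and substituting back through the outer average closes the chain.

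The main obstacle I anticipate is the passage from the squared weighted sum $\bigl(\sum_t \gamma^t X_t\bigr)^2$ to the clean diagonal $\sum_t \gamma^{2t}\,\mathbb{E}[X_t^2]$: a naive Cauchy--Schwarz with weights $\gamma^t$ produces an extra $\sum_t \gamma^t$ and a $\gamma^t$ weighting on $X_t^2$ rather than the $\gamma^{2t}$ weighting in the statement. Recovering the advertised $\gamma^{2t}$ requires either expanding the square and exploiting that cross-correlations of the reward residuals vanish (or are controllably negative) under the rollout stochasticity, or absorbing the cross-terms into the constant via Young's inequality applied pathwise. Once this bookkeeping is pinned down, what remains is routine substitution of $\epsilon_s,\epsilon_r,\epsilon_v$ and summation over $n$ to match the claimed bound.
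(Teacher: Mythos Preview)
Your proposal follows essentially the same route as the paper: the same per-simulation Bellman telescoping, the same $(a+b)^2\le 2a^2+2b^2$ split of reward-sum versus terminal value, and the identical Lipschitz decomposition $\hat{r}_t-r_t=[\mathcal{R}(\hat{s}_t)-\mathcal{R}(s_t)]+[\mathcal{R}(s_t)-\mathcal{R}^*(s_t)]$ yielding $2(L_r^2\epsilon_s^2+\epsilon_r^2)$ (and analogously for the value term). The obstacle you flag---collapsing $\bigl(\sum_t\gamma^t X_t\bigr)^2$ to $\sum_t\gamma^{2t}\mathbb{E}[X_t^2]$---is precisely where the paper does not argue but simply \emph{assumes} that ``the model inference errors are additive per step,'' an assumption that simultaneously kills the cross-$t$ and cross-$n$ terms (so the paper never needs your outer Cauchy--Schwarz and hence avoids the extra $(N{+}1)$ factor) to reach the stated $4/N^2$.
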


\begin{proof}
To provide detailed proof for the upper bound of the MSE of the search-based value estimation $\hat{V}_\text{S}$, we follow a structured approach inspired by Model-based Value Estimation (MVE) \citep{feinberg2018model}, with adjustments for the specifics of MCTS and considerations of errors of reward and value estimations.

Given Definition \ref{def:sve}, we aim to bound the MSE of this estimator, defined as:
\begin{equation}
    \text{MSE}_\nu(\hat{V}_\text{S})=\mathbb{E}\left[\left(\hat{V}_\text{S}(s_0)-V_\pi(s_0)\right)^2\right]
\end{equation}
Where $V_\pi(s_0)=\sum_{t=0}^{H-1}\gamma^{t}r_t+\gamma^{H}V_\pi(s_H)$. We can first decompose the MSE as

\begin{equation}
\begin{split}
\label{e1}
    \text{MSE}_\nu(\hat{V}_\text{S})=\mathbb{E}\Bigg[ \Bigg(\frac{1}{N} \sum_{n=0}^N\Bigg(\sum_{t=0}^{H(n)}\gamma^t(\hat{r}_t-r_t) +
    \gamma^{H(n)}\left(\hat{V}(\hat{s}_{H(n)})-V_\pi(s_{H(n)})\right)\Bigg)\Bigg)^2 \Bigg]
\end{split}
\end{equation}

According to $L-Lipschitz$ continuity of $\mathcal{R}$ and Cauchy inequality, we can derive
\begin{equation*}
\begin{aligned}
    \mathbb{E}\left[(\hat{r}_t-r_t)^2\right]&=\mathbb{E}\left[(\mathcal{R}(\hat{s}_t)-\mathcal{R}^*(s_t))^2\right]\\
    &=\mathbb{E}\left[(\mathcal{R}(\hat{s}_t)-\mathcal{R}(s_t)+\mathcal{R}(s_t)-\mathcal{R}^*(s_t))^2\right]\\
    &\leq 2\mathbb{E}\left[(\mathcal{R}(\hat{s}_t)-\mathcal{R}(s_t))^2\right]+2\mathbb{E}\left[(\mathcal{R}(s_t)-\mathcal{R}^*(s_t))^2\right] (\text{Cauchy inequality})\\
    &\leq 2L_r^2\mathbb{E}\left[\Vert\hat{s}_t-s_t\Vert^2\right]+2\mathbb{E}\left[\Vert\mathcal{R}(s_t)-\mathcal{R}^*(s_t)\Vert^2\right]\\
    &\leq 2L_r^2\epsilon_s^2+2\epsilon_r^2
\end{aligned}
\end{equation*}
Similarly, we derive the error bound of value estimation as
\begin{equation*}
    \mathbb{E}\left[\left(\hat{V}(\hat{s}_{t})-V_\pi(s_{t})\right)^2\right]=\mathbb{E}\left[(\mathcal{V}(\hat{s}_t)-\mathcal{V}^*(s_t))^2\right]\leq 2L_v^2\epsilon_s^2+2\epsilon_v^2
\end{equation*}

Assume the model inference errors are additive per step, hence

\begin{equation}
\begin{split}
    \text{MSE}_\nu(\hat{V}_\text{S})\leq& \frac{2}{N^2}\sum_{n=0}^N\Bigg(\sum_{t=0}^{H(n)}\gamma^{2t}(\hat{r}_t-r_t)^2+
    \gamma^{2H(n)}\left(\hat{V}(\hat{s}_{H(n)})-V_\pi(s_{H(n)})\right)^2\Bigg)\\
\leq&
\frac{4}{N^2}\sum_{n=0}^N\left(\sum_{t=0}^{H(n)}\gamma^{2t}(L_r^2\epsilon_s^2+\epsilon_r^2)+\gamma^{2H(n)}(L_V^2\epsilon_s^2+\epsilon_v^2)\right)
\end{split}
\end{equation}
Considering the convergence
with increasing search depth $H(n)$, we separate it into two parts:
\begin{enumerate}
    \item \textbf{The reward error term} $\sum_{t=0}^{H(n)}\gamma^{2t}(L_r^2\epsilon_s^2+\epsilon_r^2)$:
    Given $L_r\in \mathcal{C}$, $\gamma\in(0,1)$, and $\epsilon_s,\epsilon_r$ decaying with training, the series is obviously convergent.
    \item \textbf{The terminal state value error term} $\gamma^{2H(n)}(L_V^2\epsilon_s^2+\epsilon_v^2)$ is similarly converged. 
\end{enumerate}
On the other hand, this upper bound will also converge to 0 if the model is optimal $\epsilon_s,\epsilon_r,\epsilon_v\to0$.
\end{proof}

Compared to SVE, the estimation error of multi-step TD methods in \citep{de2018multi} depends on sampling, making it difficult to determine their theoretical upper bound of estimation errors. In other words, the estimation error of multi-step TD methods increases gradually during learning due to the off-policy issue.

\section{Details of Mixed Value Target}
\label{mixedV}
The mixed value target is calculated for each transition sampled from the buffer. It contains two types of value targets: the search-based value target and the multi-step TD target.

Specifically, we use two criteria to determine if we should use a search-based value target. The first criterion is whether the sampled transition comes from recent rollouts. If the transition is not from recent rollouts, meaning the transition is considered stale, we opt for the search-based value as the target. Otherwise, the multi-step TD (Temporal Difference) target is employed. The second criterion considers whether the current training step is in the early stage. During this phase, all transitions in the buffer are fresh. Additionally, the error in the dynamic model is still significant, resulting in inaccuracies in search-based value estimation. Therefore, we opt for the multi-step TD target as the target value.

\section{Details of Achitecture}
\label{arch}
For the 2-dimensional image inputs, we follow the most implementation of the architecture of EfficientZero. For the continuous control task with 1-dimensional input, we use a variation of the previous architecture in which all convolutions are replaced by fully connected layers. In the following, we describe the detailed architecture of EZ-V2 under 1-dimensional input.

The representation function $\mathcal{H}$ first processes the observation by a running mean block. The running mean block is similar to a Batch Normalization layer without learnable parameters. Then, the normalized input is processed by a linear layer, followed by a Layer Normalisation and a Tanh activation. Hereafter, we use a Pre-LN Transformer style pre-activation residual tower \citep{xiong2020layer} coupled with Layer Normalisation and Rectified Linear Unit (ReLU) activations to obtain the latent state. We used 3 blocks and the output dim is 128. Each linear layer has a hidden size of 256.

The dynamic function $\mathcal{G}$ takes the state and the action embedding as inputs. the action embedding is obtained from the action embedding layer which consists of a linear layer, a Layer Normalization, and a ReLU activation. The size of the action embedding is 64. The combination of the state and the action embedding are also processed by Pre-LN Transformer style pre-activation residual tower \citep{xiong2020layer} coupled with Layer Normalisation and ReLU activations. 

For the reward $\mathcal{R}$, value $\mathcal{V}$ and policy $\mathcal{P}$ function share the similar network structures. Taking the state as input, a linear layer followed by a Layer Normalization obtains the hidden variables. Then, we use the MLP network combined with Batch Normalization, which is similar to that of EfficientZero, to obtain the reward, value, and policy prediction. The hidden size of each layer is 256, and the activation function is ReLU. 

The reward and value predictions used the categorical representation introduced in EfficientZero. We used 51 bins for both the value and the reward predictions with the value being able to represent values between $\{-299.0, 299.0\}$. The reward can represent values between $\{-2.0, 2.0\}$. The maximum reward is 2 because the action repeat in DMControl is 2.
For policy function, the network outputs the mean and standard deviation of the Gaussian distribution. Then we use the 5 times Tanh function to restrict the range of the mean, and Softplus \citep{zheng2015improving} function to make the standard deviation over 0. In addition, the policy distribution is modeled by a squashed Gaussian distribution. A squashed Gaussian distribution belongs to a modification of a standard Gaussian, where the outputs are transformed into a bounded interval. 

Furthermore, we add a running mean block for observation in the representation function $\mathcal{H}$ in continuous control whose observation is 1 dimension. A key benefit of the module is that it normalizes the observation to mitigate exploding gradients.

\section{Training pipeline}
\label{pipeline}

The training pipeline comprises data workers, batch workers, and a learner. The data workers, also known as self-play workers, collect trajectories based on the model updated at specific intervals. The actions executed in these trajectories are determined using the sampling-based Gumbel search, as depicted in Figure \ref{framework} (B).

On the other hand, batch workers provide batch transitions sampled from the replay buffer to the learner. Similar to EfficientZero, the target policy and value in batch transitions are reanalyzed with the latest target model. This reanalysis involves revisiting past trajectories and re-executing the data using the target model, resulting in fresher search-based values and target policies obtained through model inference and Gumbel search.
The target model undergoes periodic updates at specified intervals during the training process. 

Finally, the learner trains the reward, dynamics, value, and policy functions using the reanalyzed batch. Figure \ref{framework} (A) and Equation \eqref{loss} illustrate the specific losses involved in the training process. To enhance the learning process, we have designed a parallel training framework where data workers, batch workers, and a learner all operate concurrently.

\section{Hyperparameters of Algorithms}
\label{hyper-param}

\subsection{Hyperparameters of Our Method}

We employ similar hyperparameters across all domains, as outlined in Table \ref{tab:hparams}. It's worth noting that we use different optimizers in tasks with different inputs. Specifically, due to architectural differences, we opt for the Adam optimizer in the 'Proprio Control 50-100k' task, whereas we utilize the SGD optimizer in 'Vision Control 100-200k' and 'Atari 100k'.

In the case of most baselines, we either adhere to the suggested hyperparameters provided by the authors of those baselines for each domain or fine-tune them to suit our setup when such suggestions are not available. Notably, SAC, DrQ-v2, and DreamerV3 employ a larger batch size of 512, while our method and EfficientZero achieve stable learning with a batch size of 256.

\begin{table}[h!]
\centering
\begin{tabular}{lcc}
\toprule
\textbf{Name} & \textbf{Symbol} & \textbf{Value} \\
\midrule
\multicolumn{3}{l}{\textbf{General}} \\
\midrule
Replay capacity (FIFO) & --- & $10^6\!\!$ \\
Batch size & $\mathcal{B}$ & 256 \\
Discount & $\gamma$ & 0.997\\
Update-to-data (UTD)  & --- & 1 \\
Unroll steps & $l_{\text{unroll}}$ & 5 \\
TD steps & $k$ & 5 \\
Number of simulations in search & $N_{sim}$ & 32 (16 in `Atari 100k') \\
Number of sampled actions & $K$ & 16 (8 in `Atari 100k')\\
Self-play network updating interval & --- & 100 \\
Target network updating interval & --- & 400 \\
Starting steps when using SVE & $T_1$ & $4\cdot10^{4}$ \\
Threshold of buffer index when using SVE & $T_2$ & $2\cdot10^{4}$ \\
Priority exponent & $\alpha$ & 1 \\
Priority correction & $\beta$ & 1 \\
Reward loss coefficient  & $\lambda_1$ & 1.0 \\
Policy loss coefficient  & $\lambda_2$ & 1.0 \\
Value loss coefficient & $\lambda_3$ & 0.25 \\
Self-supervised consistency loss coefficient & $\lambda_4$ & 2.0 \\
Policy entropy loss coefficient  & --- & $5\cdot10^{-3}$ \\
\midrule
\multicolumn{3}{l}{\textbf{Proprio Control 50-100k }} \\
\midrule
Optimizer & --- & Adam \\
Optimizer: learning rate & --- & $3\cdot10^{-4}$ \\
Optimizer: weight decay & --- & $2\cdot10^{-5}$ \\
\midrule
\multicolumn{3}{l}{\textbf{Vision Control 100-200k \& Atari 100k}} \\
\midrule
Optimizer & --- & SGD \\
Optimizer: learning rate & --- & 0.2 \\
Optimizer: weight decay & --- & $1\cdot10^{-4}$ \\
Optimizer: momentum & --- & 0.9 \\
\bottomrule

\end{tabular}
\caption{hyper-parameters of EfficientZero V2.
}
\label{tab:hparams}
\end{table}

\subsection{Hyperparameters of Baselines}

\subsubsection{DreamerV3}
We use the official reimplementation of DreamerV3, which can be found at \href{https://github.com/danijar/dreamerv3}{https://github.com/danijar/dreamerv3}. In line with the original authors' recommendations, the results we used are based on their suggested hyperparameters and the S model size for Atari 100K, Proprio Control 50-100k, and Vision Control 100-200k. For a comprehensive list of hyperparameters, please refer to their published paper \citep{hafner2023mastering}.

\subsubsection{TD-MPC2}
We benchmark against the official implementation of TD-MPC2 available at
\href{https://github.com/nicklashansen/tdmpc2}{https://github.com/nicklashansen/tdmpc2}. 
We reproduce results according to the official implementation, which is shown in Fig. \ref{dmc_proprio}. We use the suggested hyperparameters and select the default 5M trainable parameters. Refer to their paper \citep{Anonymous2023TDMPC2} for a complete list of hyperparameters.

\subsubsection{SAC}
We follow the SAC implementation from \href{https://github.com/ denisyarats/pytorch\_sac}{https://github.com/ denisyarats/pytorch\_sac}, and we use the hyperparameters suggested by the authors (when available). Refer to their repo for a complete list of hyperparameters. 

\subsubsection{DrQ-v2}
We follow the DrQ-v2 implementation from \href{https://github.com/facebookresearch/drqv2}{https://github.com/facebookresearch/drqv2}, and we use the hyperparameters suggested by the authors (when available). Refer to their repo for a complete list of hyperparameters. 

\subsubsection{EfficientZero}
We conduct benchmarks using the official reimplementation of EfficientZero, which can be found at \href{https://github.com/YeWR/EfficientZero}{https://github.com/YeWR/EfficientZero}. In line with the original authors' recommendations, we used their suggested hyperparameters for Atari 100K. For a comprehensive list of hyperparameters, please refer to their published paper \citep{ye2021mastering}.

\subsubsection{BBF}
We use the official results of BBF, which can be found at \href{https://github.com/google-research/google-research/tree/master/bigger\_better\_faster}{https://github.com/google-research/google-research/tree/master/bigger\_better\_faster}.

\section{Details of Experiments}
\label{train_curves}

\subsection{Comparison}
\label{compare}
We present the training curves across various benchmarks, including Atari 100k, Proprio Control 50-100k, and Vision Control 100-200k. Atari 100k, featuring 26 games, is a widely used benchmark for assessing the sample efficiency of different algorithms. In the case of Proprio Control 50-100k and Vision Control 100-200k, we have considered 20 continuous control tasks for each. You can find the training curves for EZ-V2 and the baselines in Figures \ref{dmc_proprio}, \ref{dmc_vision}, and \ref{atari}.

\subsection{Ablation}
\label{ablation}

Additionally, we include an ablation study focusing on the sampling-based Gumbel search and the mixed value target. Table \ref{statedm_abla}, \ref{visdm_abla} and \ref{atari_abla} demonstrate that our search method and mixed value target achieve superior performance on tasks with proprioceptive and image inputs. 
The action space is 1-dimensional in tasks such as Acrobot Swingup, Cartpole Swingup Sparse, and Pendulum Swingup. The dimension is greater than 2 in other DM-Control tasks. For Atari 100k, we selected 3 of the relatively most challenging tasks for our ablation study.

We have observed that our mixed value target outperforms other value estimation methods in most tasks, such as multi-step value target and generalized advantage estimation (GAE). This indicates that the method mitigating the off-policy issue consistently performs better. Compared to Sample MCTS, the S-Gumbel Search method significantly enhances performance in tasks with a high-dimensional action space. The results demonstrate that the S-Gumbel Search method can achieve superior performance with the limited simulations.

\begin{table}[h]
    \centering
    \caption{Proprio Control 50-100k}
    \label{statedm_abla}
    \scalebox{0.7}{
    \begin{tabular}{lcccccccc}
        \toprule
        Method & Acrobot swingup & \begin{tabular}{c} Cartpole \\ swingup sparse\end{tabular} & Pendulum swingup & Reacher hard & Walker walk & Walker run & Quadruped walk \\
        \midrule
        Original EZ-v2 & \textbf{297.7} & \textbf{795.4} & 825.4 & \textbf{795.4} & \textbf{944.0} & \textbf{657.2} & \textbf{925.8} \\
        W/ Multi-step TD Target & 256.7 & 473.6 & \textbf{836} & 590.7 & 839.3 & 521.6 & 649.7 \\
        W/ GAE & 275.1 & 766.7 & 336 & 601.6 & 757.3 & 512.7 & 719.7 \\
        W/ Sample MCTS & 248.1 & 789.3 & 357.4 & 754.7 & 691.7 & 381.1 & 254.7 \\
        \bottomrule
    \end{tabular}
    }
\end{table}

\begin{table}[h]
    \centering
    \caption{Vision Control 100-200k}
    \label{visdm_abla}
    \scalebox{0.7}{
    \begin{tabular}{lcccccccc}
        \toprule
        Method & Acrobot swingup & \begin{tabular}{c} Cartpole \\ swingup sparse\end{tabular} & Pendulum swingup & Reacher hard & Walker walk & Walker run & Quadruped walk \\
        \midrule
        Original EZ-v2 & \textbf{231.8} & 763.6 & \textbf{726.7} & \textbf{961.5} & \textbf{888.8} & 475.3 & \textbf{433.3} \\
        W/ Multi-step TD Target & 186.3 & 631.7 & 411.4 & 878 & 880.5 & \textbf{496.7} & 410.0 \\
        W/ GAE & 122.0 & \textbf{796.1} & 718.0 & 934.9 & 765.1 &491.5 & 299.3\\
        W/ Sample MCTS & 73.9 & 786.1 & 372.9 & 938.8 & 619.9 & 264.2 & 141.4 \\
        \bottomrule
    \end{tabular}
    }
\end{table}

\begin{table}[htbp]
    \centering
    \caption{Atari 100k}
    \label{atari_abla}
    \begin{tabular}{lccc}
        \toprule
        & \multicolumn{3}{c}{Game} \\
        \cmidrule{2-4}
        Method & Asterix & UpNDown & Qbert \\
        \midrule
        Original EZ-v2 & \textbf{61810.0} & 15224.3 & \textbf{16058.3} \\
        W/ Multi-step TD Target & 26023.1 & 13725.7 & 9463.3 \\
        W/ GAE & 22816.7 & \textbf{16255.3} & 7807.3 \\
        \bottomrule
    \end{tabular}
\end{table}

\subsection{Computation Load}
\label{load}

We have included practical comparisons between the TD-MPC2 and EZ-V2 algorithms in terms of computational load, using the 'Walker Run' task as an example. The following results include the parameter count, FLOPs (per decision step) and training time for each algorithm. The methods were benchmarked on a server equipped with 8 RTX 3090 graphics cards.

\begin{table}[htbp]
    \centering
    \caption{Model Parameters and Training Performance}
    \begin{tabular}{|c|c|c|c|}
        \hline
        & Parameters & Flops per decision step & Time per 100k training (h) \\
        \hline
        EZ-V2 & $\mathbf{1.3 \times 10^6}$ & $\mathbf{4.7 \times 10^7}$ & \textbf{2.7} \\
        TD-MPC2 & $4.9 \times 10^6$ & $3.6 \times 10^{10}$ & 3.3 \\
        \hline
    \end{tabular}
\end{table}

EZ-V2 requires 1000 times fewer FLOPs per decision step compared to TD-MPC2, while also using almost 4 times fewer parameters. The decision step, crucial for collecting interaction data, occurs during the evaluation. This remarkable efficiency arises from two primary factors:
\begin{itemize}
    \item The planning process in TD-MPC2, utilizing the MPPI method, involves predicting 9216 latent states. In contrast, our method extends only 32 latent states, significantly reducing the computational load.
    \item TD-MPC2 relies on an ensemble of Q-functions (5 heads), and its latent state dimension is 4 times larger than our method's, leading to higher computational demands.
\end{itemize}

The FLOPs per decision step are vital for deployments, especially in robotics control. Since robots typically have limited edge computing resources, using less computation is essential for real-time tasks. TD-MPC2 needs more computational resources and high-performance computing. In contrast, our method runs efficiently with less demand of computing resources, making it ideal for real-time robotic planning.

Regarding training time, both methods require a similar amount of time per 100k training steps. This similarity in time consumption is due to that EZ-V2 and TD-MPC2 share similar unrolled training frameworks with the same batch size. EZ-V2 wins in speed due to its distributed implementation, without rollout and evaluation overheads in the training process.

\begin{figure}[t]
\centering
\includegraphics[width=0.9\textwidth]{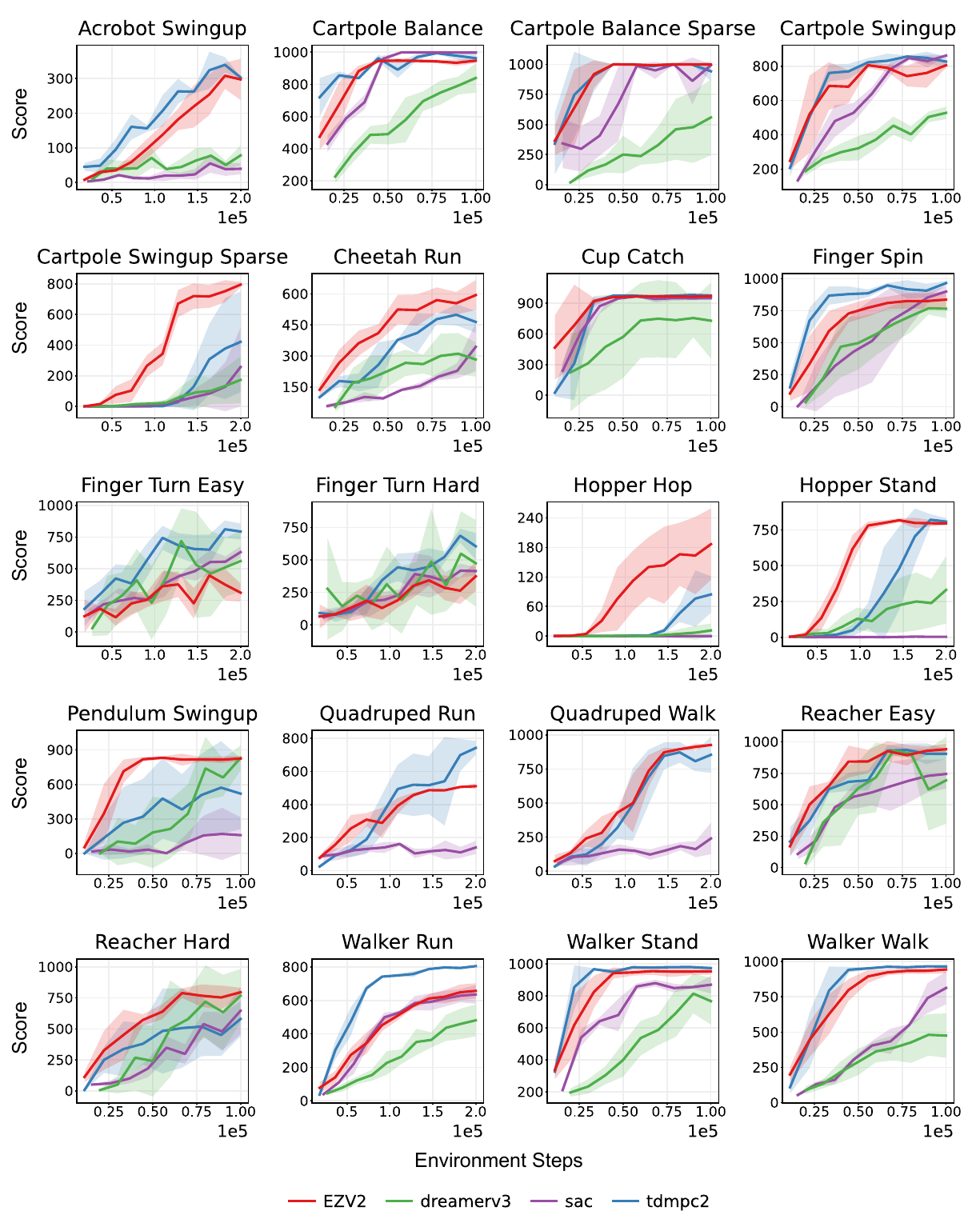}
\caption{DMC scores for proprioceptive inputs with a budget of 200K frames. it corresponds to 100K steps due to the action repeat. (Because different algorithms have varying logging frequencies, the starting points are not the same.)}
\label{dmc_proprio}
\end{figure}

\begin{figure}[t]
\centering
\includegraphics[width=0.9\textwidth]{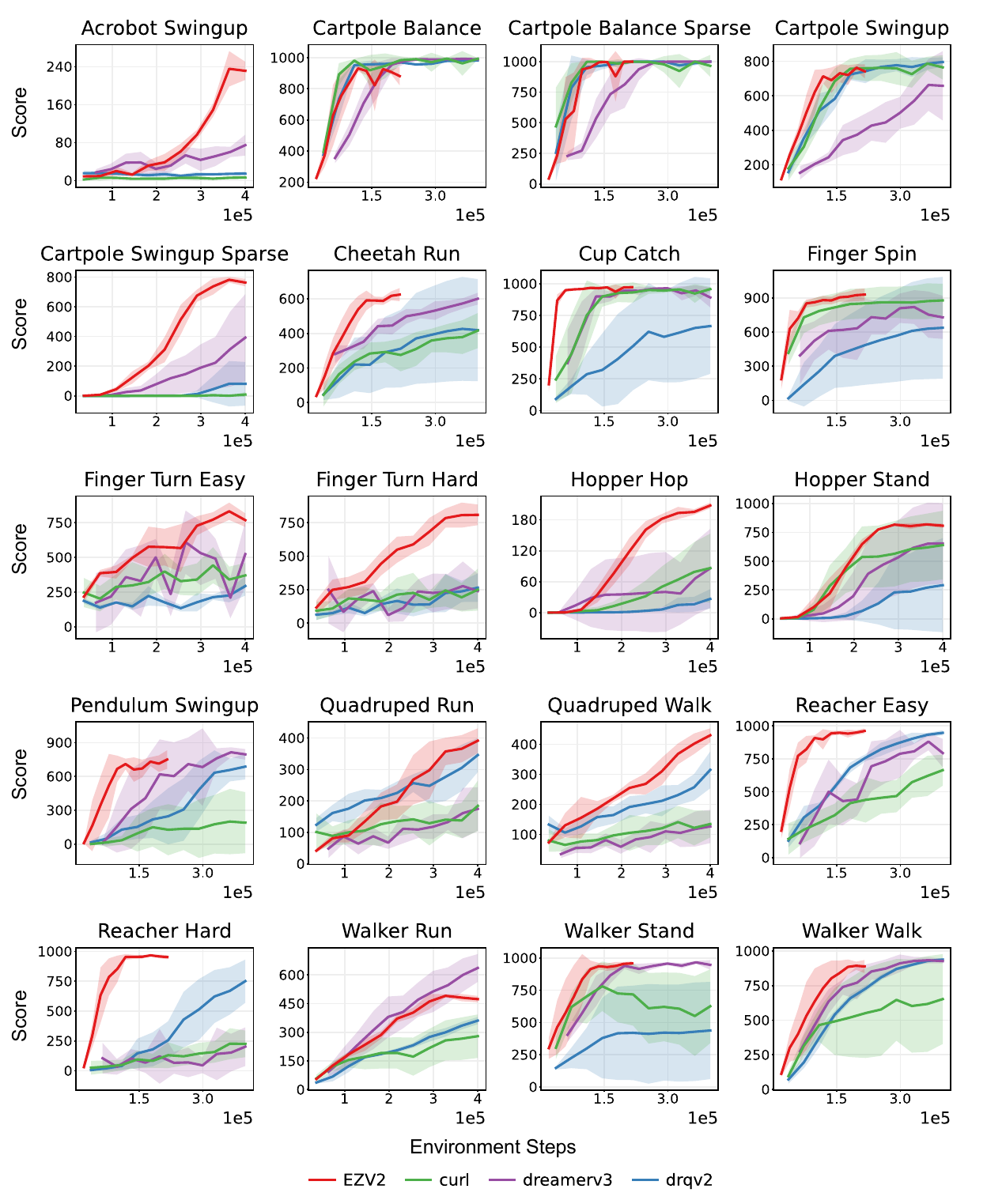}
\caption{DMC scores for image inputs with a budget of 400K frames. It corresponds to 200K steps due to the action repeat. (Because different algorithms have varying logging frequencies, the starting points are not the same.)}
\label{dmc_vision}
\end{figure}

\begin{figure}[t]
\centering
\includegraphics[width=0.9\textwidth]{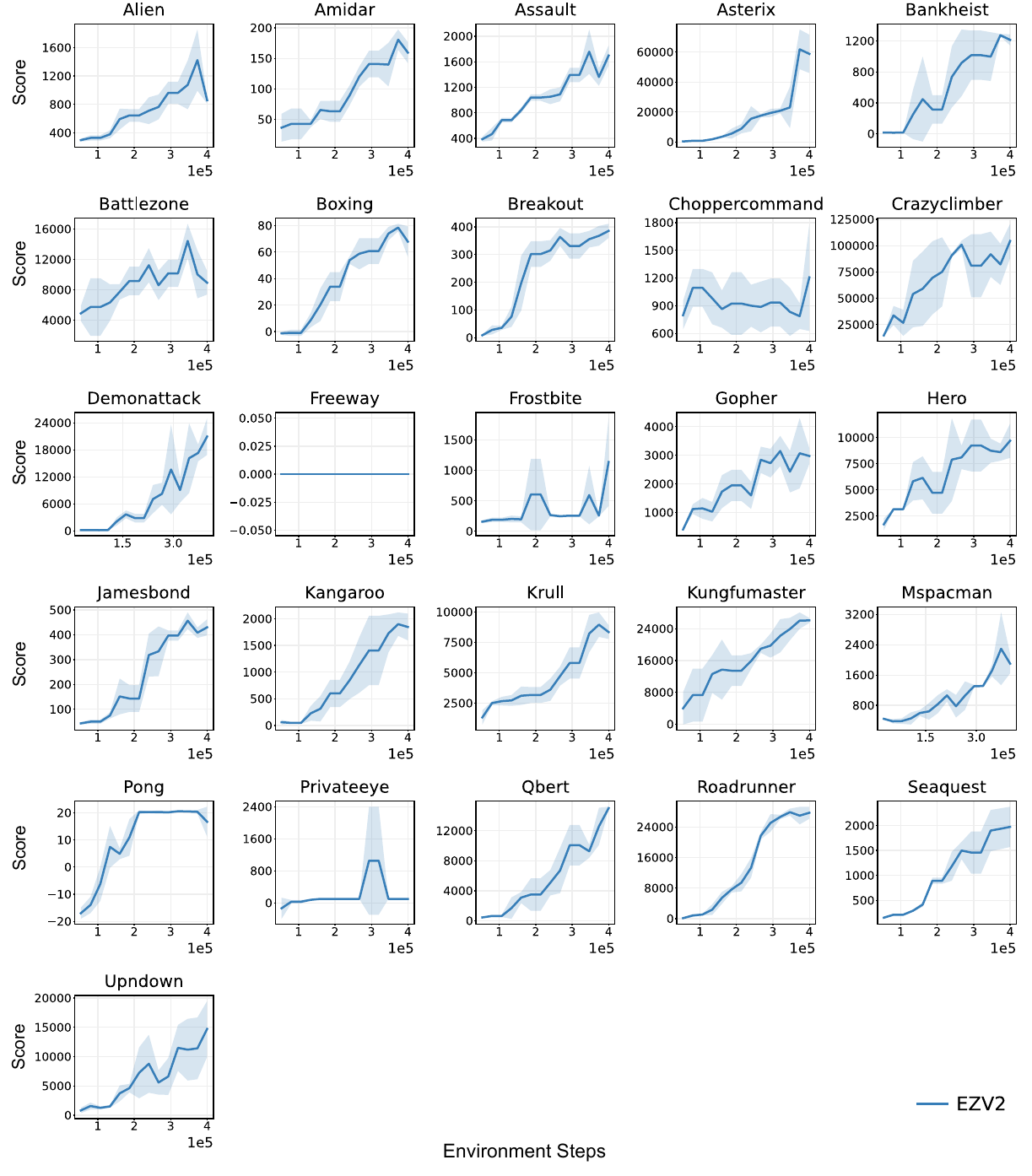}
\caption{Atari training curves with a budget of 400K frames, amounting to 100K interaction data.}
\label{atari}
\end{figure}




\end{document}